\ifdefined\pdfoutput
\pdfoutput=1
\fi

\documentclass[fleqn,11pt]{wlscirep}
\usepackage[utf8]{inputenc}
\usepackage[T1]{fontenc}
\usepackage{multirow}
\usepackage{hhline}
\usepackage{graphicx}

\usepackage{bm}
\usepackage{booktabs}
\usepackage{tabularx}

\usepackage{amsthm}
\usepackage{aliascnt}

\newaliascnt{proposition}{theorem}
\newtheorem{proposition}[proposition]{Proposition}
\aliascntresetthe{proposition}
\newaliascnt{definition}{theorem}

\aliascntresetthe{definition}
\theoremstyle{remark}

\theoremstyle{plain}

\usepackage{algorithm}
\usepackage[
  noEnd=false,
  indLines=true,
  italicComments=false,
  rightComments=true,
  commentColor=black
]{algpseudocodex}
\tikzset{algpxIndentLine/.style={draw=gray!55,very thin}}
\makeatletter
\renewcommand\fs@ruled{%
  \def\@fs@cfont{\bfseries}\let\@fs@capt\floatc@ruled
  \def\@fs@pre{{\color{black}\hrule height.8pt depth0pt}\kern2pt}%
  \def\@fs@post{\kern2pt{\color{black}\hrule}\relax}%
  \def\@fs@mid{\kern2pt{\color{black}\hrule}\kern2pt}%
  \let\@fs@iftopcapt\iftrue}
\makeatother

\usepackage{hyperref}
\usepackage{cleveref}
\usepackage{comment}

\crefname{appendix}{Supplementary Section}{Supplementary Sections}
\Crefname{appendix}{Supplementary Section}{Supplementary Sections}
\crefname{suppfig}{Supplementary Fig.}{Supplementary Figs.}
\Crefname{suppfig}{Supplementary Fig.}{Supplementary Figs.}

\crefname{supptab}{Supplementary Table}{Supplementary Tables}
\Crefname{supptab}{Supplementary Table}{Supplementary Tables}
\crefname{suppsec}{Supplementary Sec.}{Supplementary Secs.}
\Crefname{suppsec}{Supplementary Sec.}{Supplementary Secs.}

\crefname{equation}{eq.}{eq.}
\Crefname{equation}{Eq.}{Eq.}
\crefname{theorem}{Theorem}{Theorems}
\Crefname{Theorem}{Theorem}{Theorems}
\crefname{conjecture}{conj.}{conjs.}
\Crefname{Conjecture}{Conj.}{Conjs.}
\crefname{proposition}{Prop.}{Props.}
\Crefname{proposition}{Prop.}{Props.}
\crefname{definition}{dfn.}{dfn.}
\Crefname{definition}{Dfn.}{Dfn.}
\crefname{remark}{remark}{remark}
\Crefname{Remark}{Remark}{Remark}
\floatname{algorithm}{Supplementary Algorithm}
\crefname{algorithm}{Supplementary Algorithm}{Supplementary Algorithms}
\Crefname{algorithm}{Supplementary Algorithm}{Supplementary Algorithms}
\crefname{section}{Sec.}{Secs.}
\Crefname{section}{Sec.}{Secs.}
\crefname{equation}{Eq.}{Eqs.}
\Crefname{equation}{Eq.}{Eqs.}
\crefname{figure}{Fig.}{Figs.}
\Crefname{figure}{Fig.}{Figs.}
\crefname{table}{Tab.}{Tabs.}
\Crefname{table}{Tab.}{Tabs.}
\crefname{thm}{Theorem}{Theorems}
\crefname{thm}{Theorem}{Theorems}
\crefname{conj}{Conj.}{Conjs.}
\Crefname{conj}{Conj.}{Conjs.}
\crefname{dfn}{Dfn.}{Dfns.}
\crefname{dfn}{Dfn.}{Dfns.}
\crefname{remark}{Remark}{Remarks}
\Crefname{remark}{Remark}{Remarks}
\crefname{prop}{Prop.}{Prop.}
\Crefname{prop}{Prop.}{Prop.}
\Crefname{algorithm}{Supplementary Algorithm}{Supplementary Algorithms}

\usepackage{threeparttable}  %
\usepackage{dsfont}

\usepackage[table]{xcolor} %

\usepackage{xspace} %

\let\rk\relax %
\newcommand{\rk}{\operatorname{rk}} %
\newcommand*\diff{\mathop{}\!\mathrm{d}}

\usepackage{pifont} %

\newcommand \spm[1] {\footnotesize{$\pm$#1} }

\usepackage[normalem]{ulem}%

\title{Higher-Order Molecular Grammars for Generative and Foundation Models in Chemistry}

\author[1*]{Yiming Huang}
\author[2]{Yujie Zeng}
\author[3]{Vijay Prakash Dwivedi}
\author[1]{Simone Foti}
\author[4*]{Jianmin Wang}
\author[3]{Jure Leskovec}
\author[1*]{Tolga Birdal}
\affil[1]{Department of Computing, Imperial College London, London, UK}
\affil[2]{School of Electronic Engineering and Computer Science, Queen Mary University of London, London, UK}
\affil[3]{Department of Computer Science, Stanford University, Stanford, CA, USA}
\affil[4]{Department of Integrative Biotechnology, Yonsei University, Incheon, Korea}
\affil[*]{Emails: y.huang24@imperial.ac.uk, jmwang@yonsei.ac.kr, t.birdal@imperial.ac.uk}

\begin{abstract}
Molecular learning models are strongly shaped by their underlying representations.
Yet standard sequential and graph formalisms struggle to explicitly encode higher-order topology, such as ring systems and recurring motifs.
Existing higher-order representations can capture these structures directly, but they are often computationally demanding and difficult to decode into valid molecules.
Here, we introduce Higher-order Grammar Representation (HGR), a principled, topology-aware framework that lifts molecules to combinatorial complexes and parses each complex into a compact sequence of production rules under a context-free higher-order grammar.
By serialising higher-order topology into rule sequences, HGR makes these structures directly compatible with standard sequence models, avoiding the computational overhead of explicit higher-order encodings while preserving topological expressiveness.
To reduce benchmark bias towards simple ring systems, we construct RingDiv, a ring-enriched benchmark containing 1.18 million molecules, including the curated RingDiv300k subset, and introduce the ring diversity index (RDI) to quantify ring-system coverage.
In molecular generation, HGR-based models uniquely combine 100\% validity by construction with leading distributional alignment, ranking first in FCD on all five generation benchmarks.
In representation learning, HGR-FM achieves the highest mean AUC across seven MoleculeNet benchmarks under both transfer protocols, improving on the strongest baseline by 8.3 and 3.3 AUC points under probing and full fine-tuning, respectively.
Collectively, these results establish HGR as an efficient higher-order representation for molecular generation and transferable representation learning.
\end{abstract}

\begin{document}

\flushbottom
\maketitle
\thispagestyle{empty}

\section*{Main}
\pdfbookmark[1]{Main}{sec:main}

Molecular representation is a cornerstone of computational chemistry and drug design, bridging the gap between chemical structures and their biological, chemical, or physical properties~\cite{t-SMILES-NC2024}.
Effective representations enable efficient and precise navigation of chemical space and thus are essential for drug discovery tasks, including virtual screening, activity prediction, and scaffold hopping~\cite{du2024machine}.
However, current molecular representations struggle to encode higher-order structural information, that is, relations involving more than two atoms or molecular substructures, for generation and other tasks~\cite{wan2025multi,rottach2025topology}.
This limitation matters because many chemically important structures, including rings and functional groups, are intrinsically higher-order~\cite{HigherOrderReview2020}.
Ring systems constrain molecular shape and position substituents~\cite{driggers2008exploration}, whereas functional groups establish interaction sites through polarity, hydrogen bonding and acidity.
Together, these higher-order units connect atom-level connectivity to meso-scale chemical organisation and, ultimately, molecule-level behaviour, including reactivity, metabolic stability and interactions with biological targets~\cite{taylor2014rings,RingSystems-EJMC2025}.
Our statistical analysis further shows that ring-system usage is highly concentrated, with a small set recurring across many drug-like molecules. This concentration makes ring-system diversity tractable to model.
Beyond chemistry, higher-order relations are increasingly studied across diverse contexts, including the neural population activity underlying spatial navigation \cite{gardner2022toroidal}, group hunting behaviours in ecological competitive networks \cite{grilli2017higher}, and recommendations from multiple friends in social networks \cite{centola2010spread}.
Nevertheless, higher-order topology has often been encoded only implicitly, reduced to pairwise graphs, or treated as isolated motifs~\cite{HigherOrderReview2020}.

Formally, molecular representation refers to the process of encoding molecules into mathematical or computational formats that algorithms can process to model, analyse, and predict molecular behaviour.
Broadly, representations fall into three categories: early atom-level linear representations, graph-based representations, and the emerging higher-order topological representations (\cref{fig:HGR_frame}\textbf{b}). 
In addition, there exist various non-invertible mappings, such as molecular fingerprints \cite{morgan_fp2010}, which transform molecules into computationally convenient forms (e.g., vectors in feature space) but generally lack strict reversibility.

Simplified Molecular-Input Line-Entry System (SMILES) \cite{weininger1988smiles} remains one of the most widely used molecular representations, using parentheses and numbers to denote branching points and ring closures.
Although compact and interpretable, SMILES is prone to generating invalid molecules due to inherent syntactic and semantic flaws.
To mitigate these challenges, Self-Referencing Embedded Strings (SELFIES)~\cite{SELFIES-2020} improve robustness and validity in deep generative models, albeit at the cost of simplicity, interpretability and compactness.
With the rapid rise of large language models, sequence-based representations have regained attention in molecular modelling and drug discovery~\cite{LLM4SD-NMI2025,ESM3-Science2025}.
However, such string-based representations still do not explicitly encode hierarchical or meso-scale molecular structure.
Molecular graphs provide a direct representation of atoms and bonds, and graph neural networks (GNNs) have been widely used for molecular learning and generation~\cite{MolCLR-NMI2022,HiGCN2024}.
Despite these advances, GNNs face several limitations. Standard message-passing GNNs are bounded by the Weisfeiler–Lehman (WL) test~\cite{GIN-ICLR2019}; deeper architectures can suffer from over-smoothing~\cite{oversmoothing} and over-squashing~\cite{GNN-bottleneck-ICLR2021}, which hinder long-range dependency modelling; and simple non-learnable readout functions can constrain graph-level representations.
More fundamentally, both sequence- and graph-based representations leave hierarchical higher-order topology implicit, limiting their ability to transfer across structure-sensitive tasks~\cite{TDL-position+ICML2024}.

Recently, higher-order structures have gained increasing attention in molecular modelling because they expose multi-way molecular organisation in forms useful for interpretability, efficient modelling and synthetic-accessibility analysis~\cite{HigherOrderReview2020,hogdiff-ICLR2026}.
A natural entry point to higher-order molecular topology is the ring system. %
Rings provide basic molecular shape, tune rigidity and substituent geometry, and strongly influence global physicochemical and electronic properties. 
Improved synthetic methods and building-block availability further broaden accessible ring chemical space, often leveraging conformational constraints to enhance potency and selectivity~\cite{taylor2014rings,shearer2022rings}.
Indeed, only a small fraction of bioactive molecules lack rings, and the vast majority of FDA-approved small-molecule drugs contain at least one ring~\cite{RingSystems-EJMC2025}.
Moreover, molecules are inherently hierarchical~\cite{wan2025multi}. The atom level encodes fine-grained local connectivity and semantic similarity. The meso-scale captures chemically meaningful substructures, such as rings, functional groups, linkers and pharmacophores, that act as property-determining ``semantic units''. These units govern polarity, hydrogen bonding, acidity, reactivity and conformational rigidity, linking local chemistry to global topology. The molecule level then summarises the overall structural patterns.
If a model relies only on single-level reconstruction or contrastive supervision, it can either overemphasize local validity while producing globally incoherent assemblies (e.g., implausible motif fusion/attachment) or preserve coarse global structure at the expense of chemically faithful local details, making it difficult to maintain local consistency and global stability simultaneously.

Motif-based representations provide one pragmatic route to capture such higher-order topology. They can be interpreted within a hypergraph framework, where each fragment is treated as a hyperedge connecting multiple atoms~\cite{erlanson2016twenty}.
This shift from atom- to fragment-level units parallels the transition from character- to word-level representations in natural language processing, and can yield more compact and expressive molecular encodings.
These approaches broadly fall into fragment-string and fragment-graph representations.
Fragment-string methods represent molecules as sequences of substructure units, including SMILES Pair Encoding (SPE)~\cite{SPE-2021}, t-SMILES~\cite{t-SMILES-NC2024}, Group SELFIES~\cite{GroupSELFIES-2023}, and fragSMILES~\cite{fragSMILES-CC2025}.
These methods preserve the convenience of string-based modelling by serialising substructures into a one-dimensional order, but relations among fragments remain implicit in the string syntax.
Graph-based fragment methods instead operate on fragment graphs, including motif-assembly generators and graph grammars.
For example, JT-VAE~\cite{JunctionTree+ICML2018} first generates a tree and then expands selected nodes by attaching substructures from a learned vocabulary.
HierVAE~\cite{HierVAE-ICML2020} follows the same vocabulary-based principle but uses a multiresolution representation to improve combinatorial attachment and generate larger molecules.
Graph grammars~\cite{MHG-VAE-ICML2019,DEG+ICLR2022,RWG+ICML2024,FMG-ICML2025} represent graphs through derivations over reusable subgraph rules, making them the closest existing family to HGR among fragment-based methods.
However, graph grammars often face a trade-off between rule expressiveness and reuse. Context-specific fragments yield sparse, long-tailed rule vocabularies, whereas generic fragments weaken chemically meaningful motif semantics.
Taken together, these fragment-based representations capture multi-atom interactions by treating fragments as higher-order units, but they do not explicitly encode the hierarchical part-whole organisation linking atoms, motifs, ring systems and larger molecular structures.

Topological deep learning (TDL) offers a general route beyond sequence-, graph- and fragment-based representations by modelling multi-way relations with hypergraphs, simplicial complexes, cell complexes and combinatorial complexes~\cite{TDL-position+ICML2024,CCs-TDL-2022}.
Such structures are typically encoded using families of incidence (boundary) matrices. Although this makes higher-order structure explicit, generating over such structures is difficult because the incidence relations across topological dimensions must remain mutually consistent.
For example, a face generated in an edge--face incidence matrix is structurally inconsistent if its boundary edges are absent from the node--edge incidence matrix.
Molecular generation with incidence-based representations must navigate both the vast chemical space and a space of possible higher-order incidence relations that grows combinatorially with molecular size.
Together with the limitations of sequence-, graph- and fragment-based representations discussed above, these challenges reveal a common trade-off between explicit higher-order structure and computational tractability. 
Sequence and graph encodings leave higher-order topology implicit, fragment-based methods do not explicitly encode its hierarchy, and incidence-based encodings make scalable generation difficult. 
Resolving this trade-off calls for an expressive and lossless representation that remains computationally tractable for both molecular generation and property prediction.

In this work, we propose \textbf{Higher-order Grammar Representation (HGR)}, a principled and lossless representation that converts hierarchical higher-order molecular topology into a compact sequence of production rules.
The central idea of HGR is to \emph{(i)} lift a molecular graph into a combinatorial complex whose cells correspond to hierarchical and higher-order (multi-atom) relations, then \emph{(ii)} parse each molecular complex into a sequence of production rules via a bottom-up contraction procedure, thereby inducing a higher-order grammar that supports exact decoding.
Intuitively, HGR treats molecules as assemblies of reusable higher-order structural ``words'', while retaining the attachment information needed for exact reconstruction.
Each lifted cell contributes one production-rule token, and the resulting ordered sequence preserves the complex's hierarchy across atom, motif and molecule scales.
This sequence formulation makes HGR compatible with mature sequence architectures while avoiding the combinatorial cost of generation over higher-order incidence matrices. 
To address rule explosion, we introduce two lifting schemes, Motif-Induced Grammar (MIG) and Ring Scaffold Grammar (RSG), which respectively simplify complex anchor interfaces and stabilise lifting around ring systems.
Evaluating higher-order molecular representations also requires benchmarks that adequately cover complex ring systems, which are underrepresented in commonly used molecular generation datasets. 
We therefore construct RingDiv, a ring-enriched benchmark of 1.18 million molecules, together with its topology-balanced RingDiv300k subset, to evaluate molecular generation in ring-diverse chemical space. 

In summary, we make five contributions:
\emph{(i) Lifting.} We introduce two data-driven lifting strategies that convert a molecular graph into a combinatorial complex while controlling boundary complexity and rule explosion.
\emph{(ii) Higher-order grammar.} We propose HGR, a grammar-induction framework that serialises a molecule's hierarchical higher-order topology into a compact HGR string and enables exact reconstruction through deterministic reverse derivation.
\emph{(iii) Structure-aware rule encoding.}
We initialise production-rule tokens with permutation-invariant structural descriptors, yielding rule embeddings that reflect topological similarity and generalise to unseen rules.
\emph{(iv) Generative and transfer applications.} 
We instantiate HGR in two complementary regimes: grammar-constrained molecular generation through VAE-prior sampling (HGR-VAE) or latent diffusion sampling (HGR-LDF), and foundation modelling (HGR-FM) for transferable representation learning. 
Both regimes operate on the same HGR representation, enabling reuse across generative and predictive tasks.
\emph{(v) RingDiv benchmark.} We construct RingDiv, a ring-enriched benchmark that mitigates the ring-system bias of existing datasets.
Our evaluation yields three key insights:
\emph{(i)} the grammar-based generators are valid by construction and achieve strong chemical and graph-structural distributional alignment;
\emph{(ii)} the same representation is broadly transferable, letting a single pretrained model support multiple downstream tasks and improve property prediction; and
\emph{(iii)} ablations attribute these gains to the explicit encoding of hierarchical higher-order topology rather than to sequence modelling alone, identifying it as a useful inductive bias.
Relative to our preliminary report \cite{hgr-workshop}, the present study substantially extends the generative framework by refining HGR-VAE and introducing latent diffusion sampling, together with a fuller methodological specification and broader evaluation across five molecular-generation benchmarks. 
We further provide a more detailed characterisation of RingDiv, expanded ablation studies, and analyses of chemical and topological generation fidelity.
\section*{Results}
\pdfbookmark[1]{Results}{sec:Results}

\subsection*{Higher-order grammar representation}
\pdfbookmark[2]{Higher-order grammar representation}{sec3.1}
    
Drug discovery requires efficient navigation of chemical space, where the number of drug-like molecules is estimated at $10^{23}$--$10^{60}$, far beyond the reach of experimental and virtual screening~\cite{polishchuk2013estimation}.
By contrast, our statistical analysis shows that ring-system usage is highly concentrated: only 630 distinct ring systems are observed in DrugBank~\cite{knox2024drugbank} (\cref{fig:zinc250k_grammar_statistics}\textbf{a}).
More broadly, chemically meaningful motifs such as ring systems and functional groups encode multi-way relations that pairwise edges alone cannot fully capture.
We therefore represent each molecule with HGR (\cref{fig:HGR_frame}): starting from the molecular graph (\cref{fig:HGR_frame}\textbf{a}), we lift it into a combinatorial complex (CC, Fig. \ref{fig:HGR_frame}\textbf{c})~\cite{CCs-TDL-2022}, a formalism that unifies graphs, hypergraphs, and simplicial and cell complexes, and serialise the complex into a compact string of reusable production-rule tokens (\cref{fig:HGR_frame}\textbf{e}).
In this representation, cells encode multi-way relations and carry hierarchical ranks. 
The resulting HGR string preserves hierarchy and attachment information for exact decoding, making higher-order topology tractable for standard sequence models.

\begin{figure}
    \centering
    \includegraphics[width=1\linewidth]{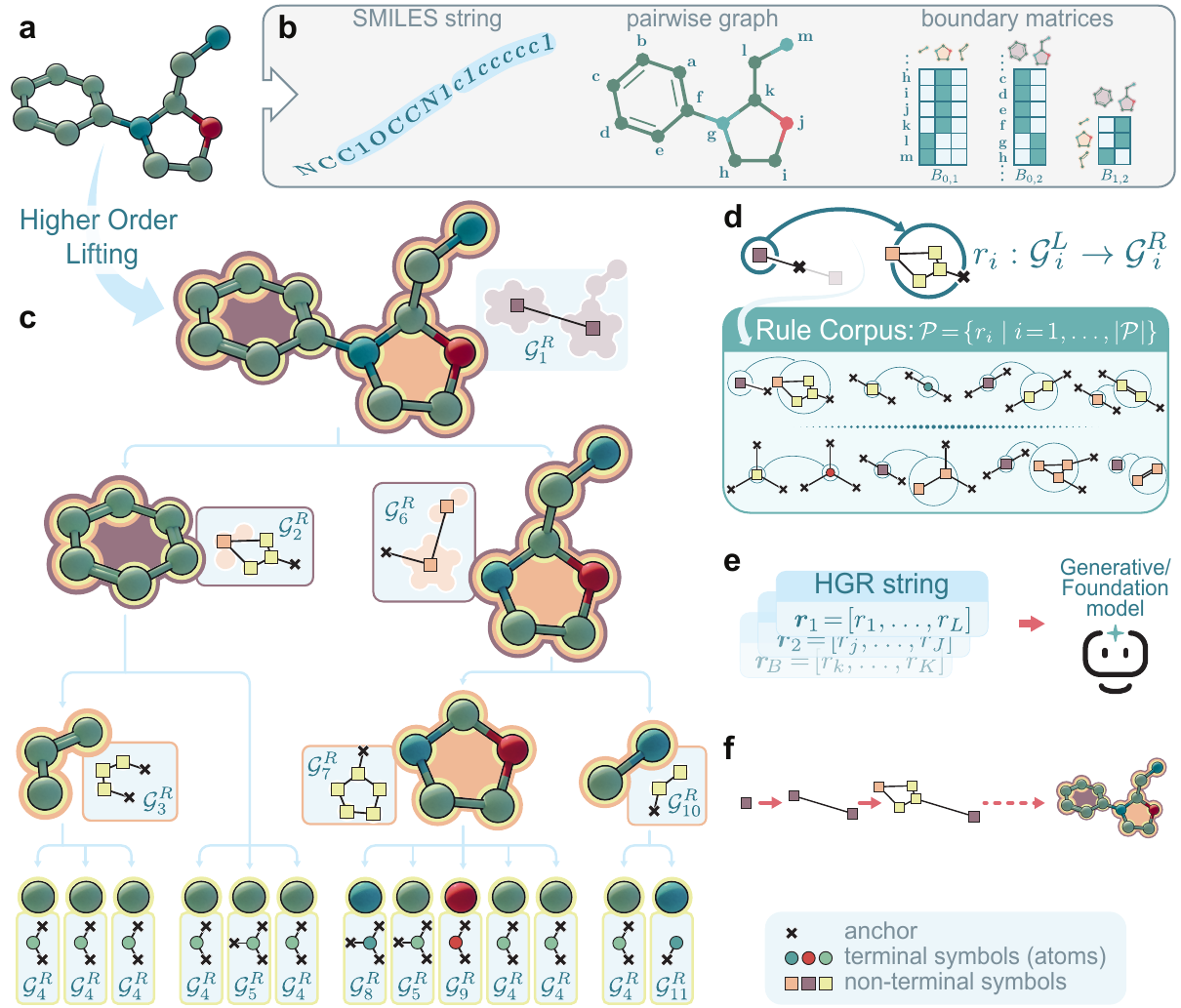}
\caption{\textbf{Overview of molecular representations and our HGR framework.}
{\bf a}, Example molecule used throughout the schematic.
{\bf b}, Representative molecular representations, illustrated by a SMILES string, a pairwise atom--bond graph and higher-order boundary matrices such as $B_{0,1}$, $B_{0,2}$ and $B_{1,2}$, which encode incidence relations.
\textbf{c}, Higher-order lifting produces a combinatorial complex that encodes the molecule's hierarchical higher-order topology, with its cells forming a contraction hierarchy. 
For clarity, each cell is annotated only with the right-hand-side graph $\mathcal{G}^R_i$ of its associated rule.
\textbf{d}, Each production rule $r_i:\mathcal{G}^L_i\rightarrow\mathcal{G}^R_i$ maps a left-hand-side interface graph that specifies a non-terminal site and its ordered anchor interface to the corresponding right-hand-side local graph.
Distinct rules induced across the molecular library form the shared corpus $\mathcal{P}=\{r_i\mid i=1,\ldots,|\mathcal{P}|\}$.
{\bf e}, HGR serialises each complex as an ordered string $\bm{r}$ of reusable rule tokens and thereby provides a common input representation for diverse molecular-learning models, such as generative and foundation models.
{\bf f}, Starting from the start symbol $S$, successive rule applications replace selected non-terminal sites with their corresponding $\mathcal{G}^R_i$ and reconnect boundary bonds through the anchor interfaces, thereby reconstructing the combinatorial complex and its associated molecular graph.
}
\label{fig:HGR_frame}
\end{figure}

\paragraph{Lifting operation.}
In topological deep learning, the conversion from a graph to a higher-order domain is known as lifting.
We adopt data-driven lifting based on a motif vocabulary mined from the molecule library, where motifs are frequent recurrent subgraphs in the dataset.
To ensure invertibility, each production rule records the lifted motif together with an ordered anchor interface, a fixed ordering of its external neighbours and boundary bonds that is retained when the motif is contracted.
This interface fixes the boundary correspondence during decoding and thereby supports exact reconstruction of the lifted complex and its associated molecular graph.
However, the rule corpus may expand when motifs with many anchors admit multiple ordered anchor interfaces, each potentially inducing a separate rule.

To balance expressivity and rule compactness, we instantiate two lifting schemes named by their induced grammars: Motif-Induced Grammar (MIG) and Ring Scaffold Grammar (RSG).
Each variant $\eta\in\{\mathrm{MIG},\mathrm{RSG}\}$ maps a molecular graph to a combinatorial complex $\mathcal{C}_{\eta}$ and induces the corresponding grammar $\Gamma_{\eta}$.
They share a frequency-guided motif-merging step but differ in the subsequent constraints: MIG lifting adds an anchor-resolution stage that curbs anchor proliferation via best-first motif expansion, whereas RSG lifting further stabilises lifting for cyclic molecules through ring-system-aware merging, mitigating rule inflation from complex ring attachments.
Formal definitions and implementation details are provided in Methods.
Unless otherwise stated, the main experiments use RSG, which provided a favourable balance between construction efficiency, rule compactness and empirical performance in our comparisons. MIG is reported as a supplementary ablation.
The lifted complex can be viewed as a hierarchical tree (\cref{fig:HGR_frame}{\bf c}): leaves correspond to atomic micro-structures, whereas internal nodes capture higher-order meso-structures.

\paragraph{Higher-order grammar induction and derivation.}
HGR serialises a lifted complex and its associated molecular graph as a compact sequence of reusable production-rule tokens.
The sequence records a bottom-up contraction in which every child subtree precedes its parent, and the resulting derivation hierarchy explicitly describes how atoms form motifs and how these motifs assemble into the complete molecule. 
During parsing, each step contracts one cell into a fresh non-terminal symbol and records the corresponding expansion as a production rule with an ordered interface graph and a local structural pattern (\cref{fig:HGR_frame}\textbf{d}).
Parsing continues until the full structure is contracted to a single root non-terminal, which is designated as the start symbol $S$.
Across the molecular library, structurally identical contractions are assigned the same production rule, so induction collects only the distinct rules in a shared corpus (\cref{fig:HGR_frame}\textbf{d}).
Applying the recorded rules in reverse losslessly reconstructs the original complex and associated molecular graph (\cref{fig:HGR_frame}\textbf{f}). 
Formal definitions and details are provided in Methods.

Whereas the shift from sequential representations to graph representations has advanced molecular machine learning, HGR takes a further leap by explicitly incorporating hierarchical higher-order structures.
This rank-aware construction lets a single HGR string span atoms, motifs and ring systems.
HGR offers three practical advantages: production rules encode reusable and interpretable higher-order motifs; grammar induction from real molecular complexes preserves valency and bonding constraints, ensuring chemical validity; and the representation supports exact reconstruction and assembly of new molecules.

\paragraph{Grammar compactness and rule reuse.}
We next quantify how the lifting variants shape grammar complexity and rule reuse on ZINC250k~\cite{data:zinc250k}, with additional results provided in \cref{SI-fig:qm9_grammar_statistics}.
As more molecules are added during induction, the corpus size (unique production rules) grows sublinearly with dataset size and approaches saturation, with MIG and RSG converging to substantially more compact corpora than the Molecular Hypergraph Grammar (MHG) baseline~\cite{MHG-VAE-ICML2019} (\cref{fig:zinc250k_grammar_statistics}{\bf b}).
Consistently, MIG and RSG exhibit markedly more balanced rule-frequency distributions, attenuating the long tail of rarely used rules and increasing overall rule reuse (\cref{fig:zinc250k_grammar_statistics}{\bf c}).

\subsection*{RingDiv: a ring-diversity benchmark}
\pdfbookmark[2]{RingDiv: a ring-diversity benchmark}{sec3.3}

\begin{figure}
\centering
\includegraphics[width=\linewidth]{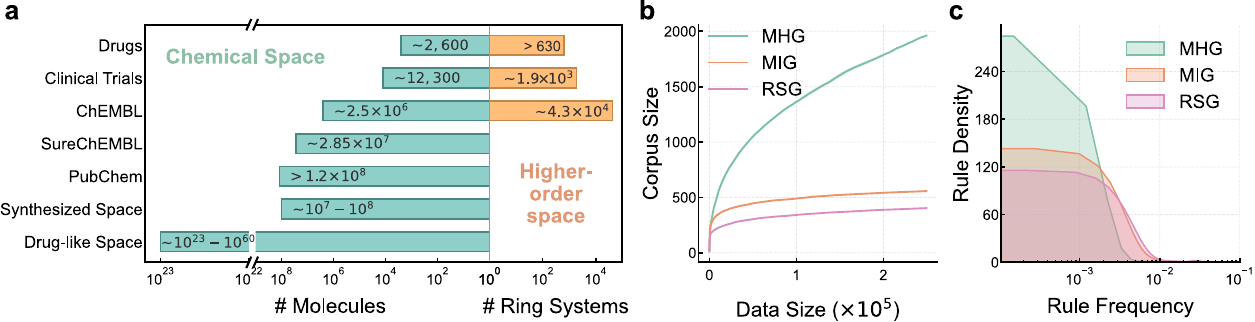}
\caption{\textbf{Ring-system sparsity and grammar-rule statistics.}
{\bf a}, Scale of experimentally accessible chemical collections (left axis) compared with ring-system diversity observed in practice (right axis).
{\bf b}, Growth of the grammar corpus on ZINC250k, quantified as the number of unique rules discovered as molecules are progressively added during grammar induction.
{\bf c}, Frequency distributions of grammar rules induced on ZINC250k.
}
\label{fig:zinc250k_grammar_statistics}
\end{figure}

Beyond representation design, rigorous evaluation requires benchmarks that cover ring-rich chemical space.
Conventional molecular generation benchmarks underrepresent ring-system diversity, creating a mismatch with modern drug-discovery chemistry, where fused, bridged and multi-ring architectures are increasingly relevant~\cite{RingSystems-EJMC2025}.
This mismatch can distort evaluation by favouring models that fit simplified benchmark distributions while leaving their behaviour in ring-diverse chemical space unresolved.
Results on QM9 and MOSES (\cref{SI-sec:gen-rel}) further show that modern generators already approach empirical reference levels on conventional benchmarks with simpler ring-system profiles, leaving little headroom to stress-test models or identify practical routes for improvement.

We therefore constructed RingDiv, a benchmark of 1.18 million molecules explicitly enriched for diverse ring systems.
RingDiv300k was then derived from RingDiv as a topology-balanced development subset of 299,819 molecules.
The construction began with approximately 143 million compounds from PubChem, SureChEMBL and five commercial screening libraries. After deduplication, these compounds were merged into a candidate corpus.
Molecules in this corpus were then annotated by ring size and fusion pattern, with macrocycles excluded to keep the benchmark focused on drug-like ring diversity. 
We then applied quota-first sampling to enrich rare but informative ring topologies (e.g., medium rings, spirocyclic and bridged systems), followed by MaxMin selection over Morgan fingerprints to preserve overall structural diversity and suppress near-duplicates (see Methods and \cref{SI-sec:ringdiv-sampling}).

To quantify ring-system coverage relative to standard generation benchmarks, we introduce the Ring Diversity Index (RDI), a ratio-based measure of ring-system coverage per molecule:
\begin{equation}
    \text{RDI} = \frac{\#\,\text{Unique Ring Systems}}{\#\,\text{Molecules}} .
\end{equation}
As shown in \cref{fig:ringdiv-analysis}\textbf{a}, RingDiv300k and RingDiv exhibit higher RDI than most benchmarks.
\Cref{fig:ringdiv-analysis}\textbf{b} complements the ratio-based RDI with absolute counts of unique ring systems, showing broader ring-topology coverage in the RingDiv variants.

Because RDI captures dataset-level coverage rather than per-molecule complexity, we complemented it with the Quantitative Ring Complexity Index (QRCI)~\cite{QRCI-2025} and separately examined ring-category prevalence and the unique-scaffold ratio.
Higher QRCI values indicate greater ring-system complexity and topological diversity. 
The QRCI distributions of RingDiv300k and RingDiv are concentrated towards the upper end of the approved-drug range (QRCI = 1.8--5), indicating enrichment for greater ring complexity without a shift into the overcomplex regime.
RingDiv300k and RingDiv show higher prevalences of medium rings and spirocyclic, fused and bridged ring systems than most benchmarks (\cref{fig:ringdiv-analysis}\textbf{d}). They also have the two highest unique-scaffold ratios.
Across the evaluated benchmarks, RingDiv and RingDiv300k combine broad ring-system coverage with relatively high per-molecule ring complexity and enrichment of uncommon ring topologies.

\begin{figure}
\centering
\includegraphics[width=\linewidth]{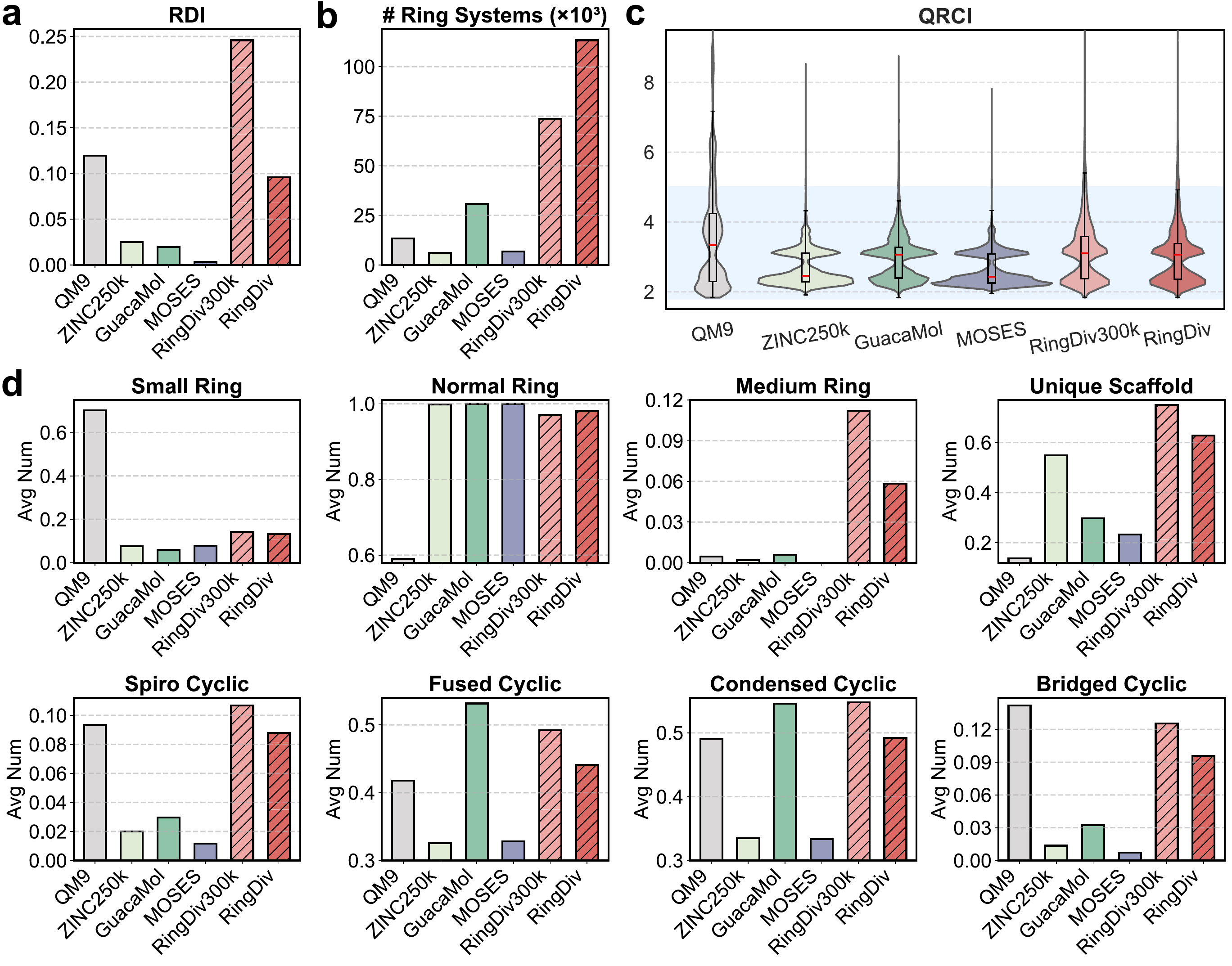}
\caption{{\bf Ring-system diversity across molecular benchmarks.}
{\bf a}, Ring Diversity Index (RDI).
{\bf b}, Number of unique ring systems.
{\bf c}, Quantitative Ring Complexity Index (QRCI) distributions. Higher QRCI values indicate greater ring complexity and topological diversity. The light-blue region (QRCI = 1.8--5) denotes the range typically observed for approved drugs, balancing structural diversity and synthetic feasibility. Values above this range indicate more complex ring systems that may reduce drug-likeness and synthetic accessibility.
{\bf d}, Fractions of molecules containing each ring-size or ring-connectivity category, together with the unique-scaffold ratio.
}
\label{fig:ringdiv-analysis}
\end{figure}

\subsection*{Molecular generation with HGR}
\pdfbookmark[2]{Molecular generation with HGR}{sec3.4}

Equipped with the HGR representation and the ring-enriched RingDiv benchmark, we first turn to molecular generation, a core drug-discovery task that supplies novel candidates and shapes downstream screening quality~\cite{du2024machine}.
However, current molecular generative models predominantly rely on sequence- or graph-based representations that emphasize local features such as atoms and bonds but fail to capture intrinsic higher-order molecular topology.
Despite recent advances in topology-aware generation~\cite{hogdiff-ICLR2026}, direct generation of higher-order structures remains elusive.
The difficulty stems from the combinatorial higher-order search space and the need to maintain consistency across incidence relations~\cite{TDL-position+ICML2024}. 

To close this gap, we represent molecules as expressive HGR strings, whose rule tokens make higher-order topology explicit.
This recasts molecular generation as sequence modelling,  allowing standard sequence generative models to operate on higher-order molecular structure.
Specifically, we adopt a Grammar Variational Autoencoder (GVAE)~\cite{GVAE+ICML2017} tailored to the induced grammar, whose decoder restricts generation to admissible HGR strings while its latent space captures semantically meaningful variations in molecular topology.
Following training, molecular generation proceeds via two parallel routes:
\emph{(i)} direct sampling from the latent space to obtain latent codes (HGR-VAE), or
\emph{(ii)} latent-space diffusion to synthesize latent codes (HGR-LDF).
The grammar decoder then maps the resulting latent codes to molecular structures.
HGR-based generation produces molecular combinatorial complexes that retain hierarchical higher-order topology, together with associated molecular graphs recovered from the recorded rule interfaces.
Because the production rules are derived from subgraphs of real molecules, rule application preserves recorded bonds and ordered anchor interfaces, yielding valency-preserving decoding by construction.
For each generated molecule, the corresponding rule sequence records how reusable higher-order rules assemble the final structure.
The representation therefore preserves local chemical constraints while recording higher-order structure and its assembly trajectory.

\begin{table}[!t]
\centering
\begin{threeparttable}
\caption{\textbf{Generation performance across molecular benchmarks.}}
\label{tab:main_gen3}
\begin{tabular}{clccccc}
\toprule
\multirow{7}{*}{\rotatebox[origin=c]{90}{GuacaMol}}
& Method & Val. $\uparrow$ & Unique. $\uparrow$ & Nov. $\uparrow$ & FCD score $\uparrow$ & KL score $\uparrow$ \\
\cmidrule(lr){2-7}
& DiGress~\cite{DiGress+ICLR2023} & 85.2 & 100.0 & 99.9 & 68.0 & 92.9 \\
& DeFoG~\cite{DeFoG-ICML2025} & 99.0 & 99.0 & -- & 73.8 & 97.7 \\
& HoG-Diff~\cite{hogdiff-ICLR2026} & 99.1 & 100.0 & 97.0 & 78.3 & 96.9 \\
& SELFIES-VAE~\cite{SELFIES-2020} & 100.0 & 100.0 & 99.7 & 39.9 & 88.0 \\
& HGR-VAE & 100.0 & 100.0 & 99.4 & 83.7 & 93.8 \\
& HGR-LDF & 100.0 & 100.0 & 98.9 & 84.4 & 95.7 \\
\midrule
\multirow{7}{*}{\rotatebox[origin=c]{90}{RingDiv300k}}
& Method & Val. $\uparrow$ & Unique. $\uparrow$ & Nov. $\uparrow$ & FCD $\downarrow$ & KL score $\uparrow$ \\
\cmidrule(lr){2-7}
& DiGress~\cite{DiGress+ICLR2023} & 98.9 & 100.0 & 100.0 & 0.6300 & 98.2 \\
& DeFoG~\cite{DeFoG-ICML2025} & 98.3 & 100.0 & 100.0 & 0.8575 & 98.7 \\
& HoG-Diff~\cite{hogdiff-ICLR2026} & 100.0 & 99.9 & 99.6 & 0.9250 & 98.2 \\
& SELFIES-VAE~\cite{SELFIES-2020} & 100.0 & 100.0 & 100.0 & 2.4266 & 92.8 \\
& HGR-VAE & 100.0 & 100.0 & 100.0 & 0.5882 & 98.1 \\
& HGR-LDF & 100.0 & 100.0 & 100.0 & 0.4652 & 99.2 \\
\bottomrule
\end{tabular}
\begin{tablenotes}
\footnotesize
\item Results from our runs are three-seed means; published values are shown as reported. Extended results, including mean $\pm$ s.d. for our runs, are provided in \cref{SI-sec:gen-rel}.
FCD is reported as a distance for RingDiv300k (lower is better), whereas GuacaMol reports the benchmark FCD score $=100\exp(-0.2 \times \mathrm{FCD})$ (higher is better).
\end{tablenotes}
\end{threeparttable}
\end{table}

To assess generation performance, we compared HGR-based generators with representative diffusion-based, higher-order-aware and sequence-control baselines on GuacaMol and RingDiv300k, two larger and more discriminative benchmarks (\cref{tab:main_gen3}).
SELFIES-VAE~\cite{SELFIES-2020} serves as a syntax-valid sequence representation control in the main comparison.
Consistent with our valency-preserving design, both HGR-VAE and HGR-LDF achieved 100\% RDKit validity on all five generation benchmarks.
HGR-LDF showed strong distributional alignment, achieving the best FCD score on GuacaMol (84.4 versus 78.3 for the strongest diffusion baseline) and the lowest FCD and best KL score on RingDiv300k (0.4652 and 99.2). HGR-VAE ranked second by FCD on both benchmarks.
SELFIES-VAE also achieved 100\% validity but showed weaker distributional alignment on RingDiv300k (FCD 2.4266; KL score 92.8).
HGR is thus the only representation that combines full validity with leading chemical-distribution alignment.
One exception in the main comparison is the GuacaMol KL score, where HGR-LDF trails the strongest diffusion baselines.

Extended results in \cref{SI-sec:gen-rel} support this interpretation across additional datasets, baselines and structural metrics.
Among these structural checks, NSPDK MMD is used as a secondary structural-distribution metric, and curvature-filtration descriptors are reported as auxiliary metrics.
SMILES-VAE reached competitive distributional scores only at substantially lower validity, highlighting the need for a validity-preserving representation.
HGR-based generators also achieved lower FCD and NSPDK MMD than MHG-VAE, the closest molecular hypergraph grammar baseline.
Within HGR, RSG generally improved topology-distribution alignment over MIG, indicating that ring-system-aware grammar construction contributes to generation quality.
These extended benchmarks further show that QM9 and MOSES are close to saturation, underscoring the need for more discriminative benchmarks like RingDiv.
Beyond generation quality, HGR-VAE trained fastest on all five benchmarks and sampled in minutes rather than the hours required by diffusion baselines (\cref{SI-tab:gen_runtime}).

\begin{figure}
\centering
\includegraphics[width=1\linewidth]{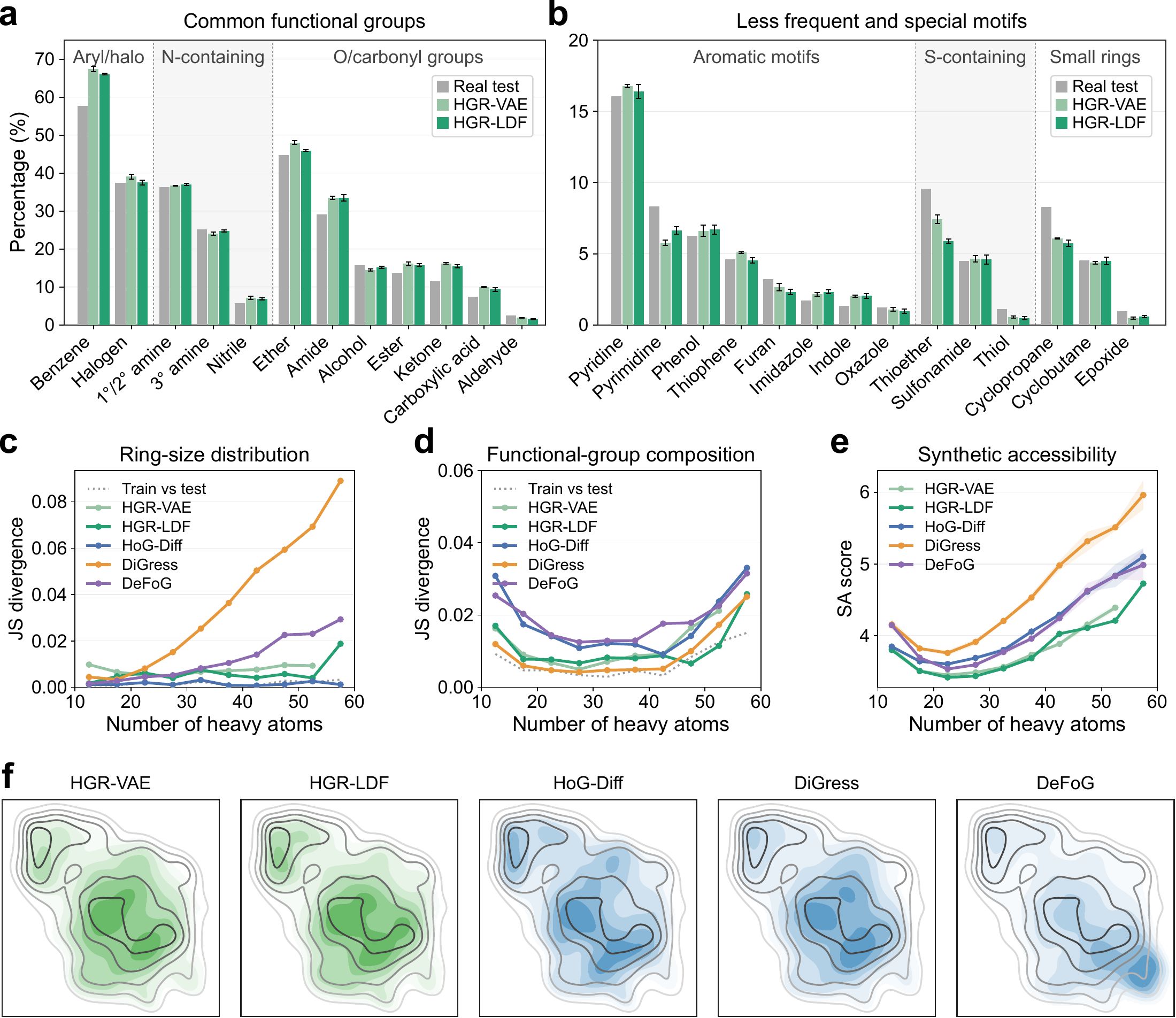}
\caption{{\bf Generation fidelity and chemical-space coverage of molecular generative models on RingDiv300k.}
{\bf a}, {\bf b}, Functional-group prevalence in the RingDiv300k test set and HGR-generated molecules for common functional groups (\textbf{a}) and less frequent or special motifs (\textbf{b}). Bars show motif prevalence; error bars denote s.d. across three generation seeds.
{\bf c}, {\bf d}, Size-resolved Jensen--Shannon (JS) divergence between generated and test-set distributions for ring-size distributions ({\bf c}) and functional-group composition ({\bf d}), with molecules binned by heavy-atom count. Lower JS indicates closer agreement with the test distribution; grey dotted curves show the train-versus-test reference.
{\bf e}, Lines show size-resolved synthetic-accessibility (SA) scores; shaded bands denote s.d. across generation seeds. Lower SA indicates easier synthesis.
{\bf f}, Chemical-space coverage in Morgan-fingerprint Uniform Manifold Approximation and Projection (UMAP) space.
Filled densities show generated molecules for each model (HGR variants, green; baselines, blue), and black contours show the test-set density.
}
\label{fig:gen-rel}
\end{figure}

To test whether generated molecules retain chemically meaningful substructures, we profiled each molecule against a predefined functional-group panel and recorded the fraction of molecules containing each group (SMARTS definitions and procedure in \cref{SI-sec:fg}).
This local-composition analysis shows that both HGR generation routes preserve the broad functional-group profile of the RingDiv300k reference test set (\cref{fig:gen-rel}\textbf{a,b}).
Nitrogen-containing groups and aromatic motifs showed the closest agreement, with the dominant pyridine essentially reproduced and most groups agreeing to within about one percentage point. 
HGR-LDF further reduced the ether enrichment observed under HGR-VAE. 
Residual deviations were confined to a few motifs: benzene and carbonyl-containing groups such as amides and ketones were enriched, whereas thioethers, cyclopropanes, epoxides and rare aldehydes were under-represented.
Overall, grammar-constrained generation preserves the major functional-group composition of the reference set, although a few motif-level deviations remain.

We next assessed whether this fidelity persisted across molecular size by binning molecules by heavy-atom count and measuring per-bin agreement with the test set (\cref{fig:gen-rel}\textbf{c--e}).
For the ring-size distribution (\cref{fig:gen-rel}\textbf{c}), both HGR variants stayed close to the train-versus-test floor at all sizes (JS $\leq 0.01$).
By contrast, the graph-diffusion baselines showed larger deviations on large molecules, with DiGress rising about tenfold to JS $\approx 0.09$ above 45 heavy atoms.
The ranking differed for functional-group composition (\cref{fig:gen-rel}\textbf{d}): the ring-specialist HoG-Diff and DeFoG were less accurate (JS $\approx 0.02-0.03$), whereas HGR remained low (JS $\approx 0.01-0.015$) and comparable to the most accurate baseline, DiGress.
HGR showed the most consistent agreement with the reference across both ring topology and local composition.
Because ring-rich molecules can be synthetically challenging, we next examined size-resolved synthetic accessibility~\cite{SAscore-2009} (\cref{fig:gen-rel}\textbf{e}). 
HGR-generated molecules retained low SA scores across the size range, whereas diffusion baselines became less accessible with size, with DiGress reaching SA $\approx 6$.
Together, the size-resolved analysis shows that grammar-constrained generation preserves chemical fidelity and synthetic feasibility on large, ring-rich molecules for which graph-diffusion models showed larger deviations.

We further compared chemical-space coverage using two-dimensional Uniform Manifold Approximation and Projection (UMAP)~\cite{umap-NRMP2024} density maps of Morgan fingerprints~\cite{morgan_fp2010} (\cref{fig:gen-rel}\textbf{f}).
Because Morgan fingerprints encode local substructures, these density maps provide a compact and intuitive comparison of reference and generated chemical-space distributions.
The HGR variants showed broad overlap with the test-set density, whereas the diffusion baselines showed more restricted or shifted density patterns.

\subsection*{Molecular foundation model with HGR}
\pdfbookmark[2]{Molecular foundation model with HGR}{sec3.5}

\begin{figure}
\centering
\includegraphics[width=0.99\linewidth]{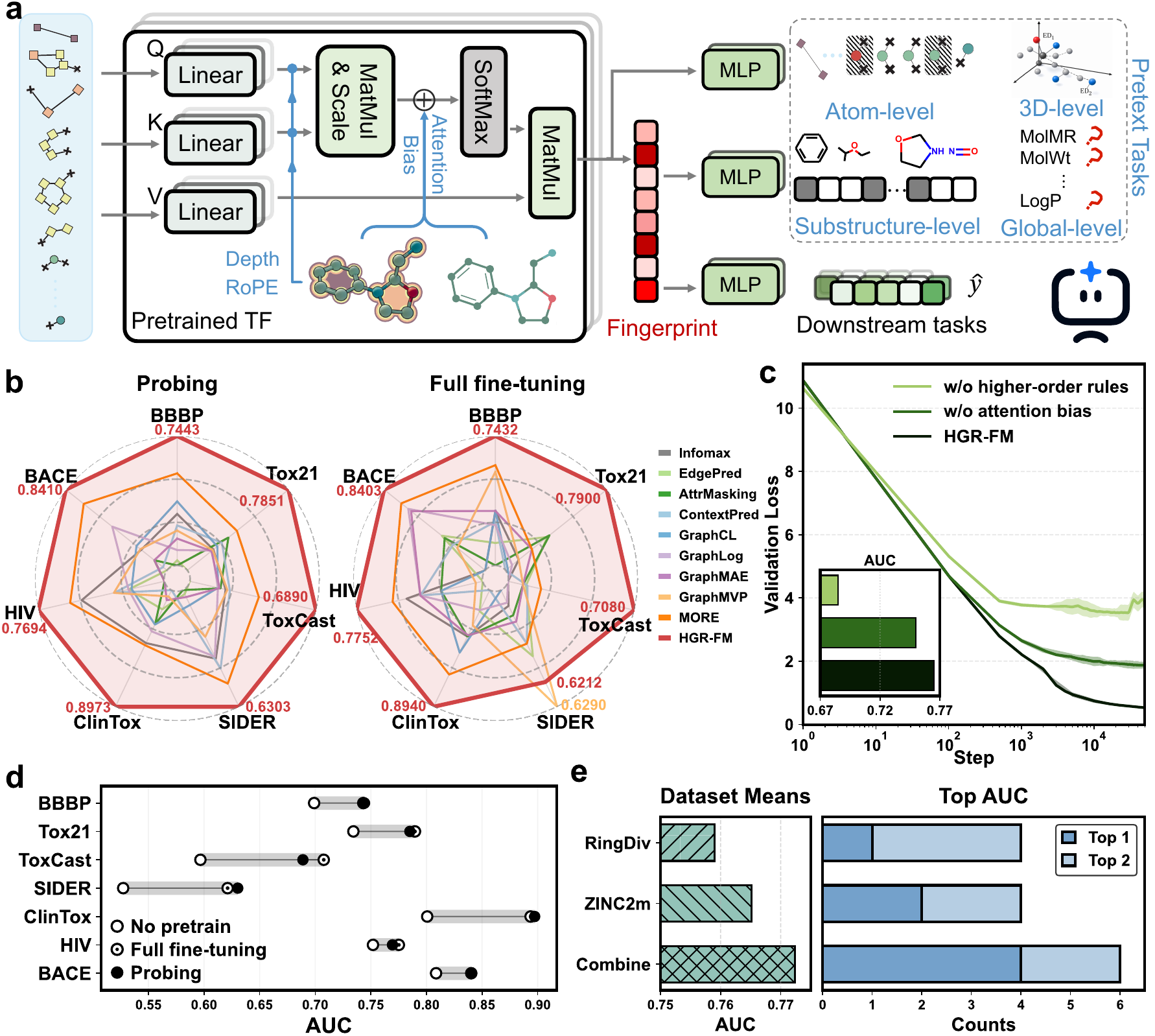}
\caption{{\bf Workflow and transfer performance of the HGR-based molecular foundation model.}
{\bf a}, Overview of HGR-based pretraining and downstream transfer.
A pretrained Transformer with rank-aware rotary position encoding (RoPE) and an HGR attention bias yields a molecular fingerprint, jointly optimised with atom-, substructure-, 3D- and global-level pretext tasks and transferred to downstream tasks via lightweight multilayer-perceptron (MLP) task heads.
{\bf b}, Normalised AUCs after ZINC2m pretraining under probing (left) and full fine-tuning (right) across seven MoleculeNet benchmarks. AUCs are min--max normalised per task across all methods; annotated values are the raw HGR-FM AUCs.
{\bf c}, Ablation of higher-order rule tokens and attention bias. Curves show validation loss during pretraining for the full HGR-FM model and variants without each component. Inset, mean probing AUC across seven downstream tasks.
{\bf d}, Comparison of training regimes: no pretraining, probing and full fine-tuning. %
{\bf e},  Effect of pretraining data source, comparing RingDiv, ZINC2m and their combination. Left, mean probing AUC; right, number of downstream tasks for which each setting ranks first or second.
}
\label{fig:FM-rel}
\end{figure}

Beyond generation, the same HGR string can serve as the substrate for a foundation model.
Molecular foundation models (FMs) pretrained on large corpora can learn transferable representations for downstream tasks with limited task-specific supervision~\cite{MolFormer-NMI2022,SPMM-NC2024,MolE-NC2024}.
However, current molecular FMs largely depend on SMILES strings\cite{MolFormer-NMI2022,SPMM-NC2024}, molecular graphs\cite{KGPT-NC2023,MolE-NC2024,MORE-AAAI2025}, or simple combinations of the two\cite{DVMP-KDD2023}.
These SMILES- and graph-centred inputs mainly encode atomic connectivity, leaving hierarchical higher-order patterns underrepresented.
Simply scaling models on such inputs is therefore unlikely to overcome this representational bottleneck, limiting their potential in chemistry and drug discovery~\cite{rottach2025topology}.

To address these gaps, we propose an HGR-based molecular foundation model (HGR-FM) (\cref{fig:FM-rel}\textbf{a}) that uses compact HGR strings as structurally expressive inputs and is pretrained for multi-task transfer.
Each rule token is initialised with a structure-derived embedding from Weisfeiler-Lehman (WL) label histograms.
This embedding is permutation-invariant and generalises to unseen rules.
A Grammar Transformer then encodes these token embeddings into contextualized representations.
Because vanilla Transformers do not exploit hierarchical or topological structure by default, we introduce a higher-order position encoding implemented as a composite attention bias.
This bias injects cues for cell rank, grammar-tree distance and relation type, together with pairwise molecular connectivity, into the pre-softmax attention logits.
Contextualized atom-level token states support atom-level prediction, whereas the final hidden state of the molecule-level token serves as a fixed-length molecular fingerprint.
Finally, multi-level pretext tasks align atom-, fragment-, molecule- and conformation-level signals, with shallow task-specific multilayer perceptron (MLP) heads projecting the corresponding representations to their training targets.
Model architecture, rule-token initialisation and pretraining objectives are detailed in \cref{SI-sec:HGR-FM}.

After pretraining, we evaluated HGR-FM across various unseen downstream tasks using two standard transfer protocols: full fine-tuning and probing.
Performance was reported as AUC (area under the receiver operating characteristic curve), following the MoleculeNet convention.
Across seven MoleculeNet benchmarks (BBBP, Tox21, ToxCast, SIDER, ClinTox, HIV and BACE)~\cite{moleculenet-wu2018}, HGR-FM achieved the highest average AUC under both transfer protocols (\cref{fig:FM-rel}\textbf{b}). At the task level, HGR-FM led all seven benchmarks under probing and six under full fine-tuning, with SIDER as the only exception. 
The largest probing gain was observed on ClinTox (89.7 versus 63.5 for the strongest external baseline).
Complete task-level probing and full fine-tuning results are provided in~\cref{SI-tab:FM_transfer_combined}.
Comparing the two transfer protocols, probing approached or surpassed full fine-tuning (\cref{fig:FM-rel}\textbf{d}).
This pattern may reflect full fine-tuning's instability on small downstream datasets: updating the pretrained encoder can induce representation drift, whereas freezing it acts as an implicit regularizer.
The narrow gap between probing and fine-tuning indicates that the pretrained encoder learns semantically rich and transferable representations from which shallow prediction heads can read out task-relevant features.

To isolate the source of these gains, we ablated the two HGR-specific components, first removing higher-order tokens from the HGR string to leave only atom-level tokens.
Experimental results show that higher-order rule tokens are important for the observed gains: removing them degraded pretraining dynamics, yielding consistently higher validation loss and a clear drop in downstream mean AUC (\cref{fig:FM-rel}\textbf{c}).
Disabling the attention bias also degraded performance, but the effect was smaller than that of removing higher-order rule tokens.
These results support the core design premise: tokenizing molecules with higher-order rules exposes meso-scale and global structural cues to the Transformer, letting it learn representations that go beyond atom-level co-occurrence and better reflect hierarchical molecular topology.

We next assessed the impact of pretraining data by comparing models trained on RingDiv, ZINC2m and their combination (\cref{fig:FM-rel}\textbf{e}).
ZINC2m mainly comprises simpler molecular structures that capture fundamental chemical patterns and suffice for many benchmark tasks, whereas RingDiv contributes more complex and topologically rich molecules that broaden the model’s exposure to higher-order structures.
Combining ZINC2m with RingDiv yielded the best downstream performance, highlighting the value of larger and structurally broader pretraining data.

Together, these results show that HGR-FM integrates symbolic production rules with explicit structural inductive biases to produce transferable and chemically informed representations.
Its strong probing performance shows that grammar-informed pretraining learns task-relevant molecular features efficiently, positioning HGR-FM as a promising foundation model framework for molecular representation learning.

\section*{Discussion}
\pdfbookmark[1]{Discussion}{sec:Discussion}

This work bridges topological deep learning and sequence modelling by encoding hierarchical molecular topology as a decodable string that standard sequence models can learn from.
Building on a combinatorial-complex view of molecules, we examined three linked aspects: reusable grammar induction, ring-diverse benchmarking, and HGR-based generation and foundation modelling.
These analyses show that explicit higher-order topology can support chemically valid generation, improve distributional alignment on ring-diverse chemical space, and yield transferable molecular representations.
Ablations that remove the higher-order rule tokens, leaving only atom-level tokens, degrade transfer (\cref{fig:FM-rel}\textbf{c}), indicating that the gains stem from higher-order topology rather than from atom-level co-occurrence alone.

Conventional molecular benchmarks underrepresent complex ring systems, whereas modern drug discovery increasingly involves fused, bridged and multi-ring architectures.
This imbalance can hamper generalisation to ring-diverse chemical space.
Performance saturation on conventional benchmarks can also mask methodological differences that surface only under more discriminative evaluation.
RingDiv and HGR address the two sides of this gap: RingDiv enriches evaluation with 1.18 million ring-diverse molecules, and HGR converts molecular combinatorial complexes into production-rule sequences that can be modelled by standard sequence architectures.
In particular, HGR contrasts with incidence-matrix encodings of higher-order topology, which represent structure explicitly but can make molecular generation computationally costly.
Together, RingDiv and HGR pair a more demanding evaluation regime with a representation that exposes reusable higher-order motifs for generation and transfer learning.

The present study also identifies several practical limitations of HGR and its current evaluation.
Computationally, the lifting step is the main upfront cost for very large corpora, although it runs once, parallelises well and is amortised across subsequent training and generation.
This cost is further mitigated downstream, as compact HGR strings enable HGR-VAE to train and sample faster than the diffusion baselines.
Generation remains sensitive to local chemical composition, with motif-level biases such as ether enrichment under direct latent sampling. Latent-space diffusion reduced this enrichment, and the remaining distributional gaps identify motifs for which grammar-constrained sampling could be improved.
Finally, the generated molecules and predicted candidates remain computational proposals until prospective wet-lab assays establish their practical value.
Beyond addressing these boundaries, future work can extend HGR to other chemical tasks.
For instance, localised edits to HGR rules could support molecular editing, while substituting rules that encode core ring systems could enable scaffold hopping.

Beyond these specific tasks, HGR offers a reusable structural vocabulary that makes the higher-order topology of molecules explicit and learnable by encoding how local chemical units assemble into meso-scale motifs and whole-molecule architectures. 
RingDiv provides a benchmark for evaluating molecular models on complex, ring-rich chemistry.
More broadly, this work suggests a route towards applying LLM-style sequence models to graph- and topology-centric scientific domains by serialising higher-order structures into production-rule sequences.

\section*{Methods}
\pdfbookmark[1]{Methods}{sec:Methods}

\subsection*{Preliminaries and problem formulation}
\pdfbookmark[2]{Preliminaries and problem formulation}{sec:5.prelim}

We introduce the HGR string as a compact, invertible sequence representation of higher-order molecular topology.
Starting from a molecular graph $\mathcal{G}$, we first lift it to a combinatorial complex $\mathcal{C}$ that encodes hierarchical higher-order structure.
Grammar induction then uses both $\mathcal{C}$ and $\mathcal{G}$ to produce an HGR string $\bm{r}$.

A molecule can be represented as a graph $\mathcal{G}=(\mathcal{V},\mathcal{E})$, with atoms as vertices $\mathcal{V}$ and covalent bonds as edges $\mathcal{E}$. However, molecules often contain multi-way relations such as rings, functional groups and other higher-order motifs that pairwise edges cannot fully capture~\cite{HigherOrderReview2020,RingSystems-EJMC2025,erlanson2016twenty}. We therefore represent a molecule as a \emph{combinatorial complex} (CC)~\cite{CCs-TDL-2022}, a general formalism that unifies graphs, simplicial and cell complexes, and hypergraphs within a single framework. A CC is a triple
\begin{equation}
    \mathcal{C}=(\mathcal{V},\mathcal{X},\rk),
\end{equation}
where $\mathcal{V}$ is a finite set of vertices, $\mathcal{X} \subseteq \operatorname{Pow}(\mathcal{V}) \setminus \{\emptyset\}$ is a collection of non-empty subsets called \emph{cells} (or relations), and $\rk: \mathcal{X} \to \mathbb{Z}_{\geq 0}$ is a \emph{rank} function. The structure satisfies two conditions: every singleton $\{v\}$ is a cell with $\rk(\{v\}) = 0$, so atoms have rank $0$; and $\rk$ is order-preserving, i.e.\ $x \subseteq y \Rightarrow \rk(x) \leq \rk(y)$.
In HGR, the lifted CC encodes the higher-order hierarchy, while the molecular graph $\mathcal{G}$ supplies the pairwise adjacency used during rule construction; the two roles stay separate throughout.

\subsection*{Molecular lifting algorithms}
\pdfbookmark[2]{Molecular lifting algorithms}{sec:5.2}

We now specify how molecular graphs are lifted into combinatorial complexes.
Let $\mathcal{G}=(\mathcal{V},\mathcal{E})$ be a molecular graph and let $\mathcal{M}$ denote the motif vocabulary mined from a molecule library. 
For each lifting variant $\eta\in\{\mathrm{MIG},\mathrm{RSG}\}$, molecular lifting defines a fixed, data-driven structural map
\begin{equation}
    \Lambda_{\eta}: (\mathcal{G},\mathcal{M})
    \longmapsto
    \mathcal{C}_{\eta}=(\mathcal{V},\mathcal{X}_{\eta},\rk_{\eta}),
\end{equation}
where $\mathcal{X}_{\eta}$ is the cell family recorded by the lifting hierarchy, containing singleton atom cells and the lifted motif cells produced by merging.
The molecular graph $\mathcal{G}$ is retained alongside $\mathcal{C}_{\eta}$ and later supplies the pairwise adjacency used to define the anchor interfaces and local rule graphs.

Both lifting variants are built on a frequency-guided motif-merging backbone.
Starting from singleton atom cells, the algorithm scores adjacent candidate subgraphs by their frequency and merges the highest-frequency non-overlapping candidates in parallel, yielding progressively coarser motif cells.
The merges form a contraction tree whose leaves are atom cells.
For any cell $\sigma$, its rank $\rk_{\eta}(\sigma)$ is defined as its height in this tree, namely the maximum number of edges from $\sigma$ to an atom leaf.
This definition assigns rank $0$ to atom cells and makes rank non-decreasing under cell inclusion: $\sigma\subseteq\tau$ implies $\rk_{\eta}(\sigma)\leq\rk_{\eta}(\tau)$.

A key practical challenge is controlling the number of boundary anchors and the resulting attachment complexity.
For a merged cell $\sigma$, its anchors are the external neighbours of $\sigma$ in the molecular graph $\mathcal{G}$. More anchors give rise to more attachment configurations and can inflate the induced rule corpus.
The two lifting variants specialise this shared backbone in different ways.
MIG adds anchor resolution to simplify complex interfaces, whereas RSG stages merging and anchor resolution around ring systems to preserve them as coherent higher-order units during lifting.

\paragraph{MIG lifting.} MIG is designed to reduce excessive anchors. When a merged motif exposes an ambiguous interface, defined here as more than two anchors, MIG expands the motif by absorbing neighbouring subgraphs in a best-first manner, prioritising higher-frequency motifs. This absorption continues until the interface becomes unambiguous. MIG typically decreases the number of induced rules, at the cost of producing slightly larger motifs. The merging procedure and its anchor-resolution subroutine are summarised in \cref{SI-alg:mig_lifting}.

\paragraph{RSG lifting.}
RSG further stabilises lifting in cyclic chemistry by prioritising ring-system-aware merges.
It first identifies rings using the smallest set of smallest rings decomposition and groups rings that share atoms into common ring systems.
RSG then applies constrained merging in stages. Non-ring regions are first merged to consolidate acyclic fragments; side chains that attach uniquely to a ring system are absorbed into that system; within-system merges are prioritised before cross-system merging is allowed to preserve ring coherence; and each ring system is treated as a unit for anchor resolution, allowing necessary neighbouring vertices to be absorbed while keeping the interface manageable. A final global merging pass is then applied.
This staged design yields a more consistent hierarchy for ring-rich molecules and mitigates rule inflation caused by complex ring attachments.

\subsection*{Higher-order grammar induction and derivation}
\pdfbookmark[2]{Higher-order grammar induction and derivation}{sec:5.grammar}

Intuitively, a \emph{terminal symbol} denotes an observed atom-level object, whereas a \emph{non-terminal symbol} marks a replaceable site corresponding to a contracted cell.
A \emph{production rule} rewrites a non-terminal site into a local structure when its left-hand-side anchor interface matches the site's interface.
Formally, the grammar is $\Gamma = (\Sigma, \mathcal{N},  {S}, \mathcal{P})$, where $\Sigma$ and $\mathcal{N}$ denote the finite sets of terminal and non-terminal symbols, respectively, ${S}\in\mathcal{N}$ is the start symbol, and $\mathcal{P} = \{r_1, r_2, \dots, r_{|\mathcal{P}|}\}$ is the corpus of production rules.
For each complex produced by either lifting variant, \emph{parsing} returns an ordered HGR string $\bm{r}$ of rule tokens drawn from $\mathcal{P}$.
Across the library, \emph{induction} accumulates the rules created during parsing into the production-rule corpus $\mathcal{P}$.
The reverse direction, \emph{derivation}, rebuilds a complex from the start symbol $S$.
The overall induction-derivation workflow is illustrated in \cref{SI-fig:grammar_induction}.

\paragraph{Production-rule construction.}
We consider parsing a single lifted complex $\mathcal{C}$ paired with its molecular graph $\mathcal{G}$.
Rank-zero cells correspond to atoms, whereas higher-rank cells are the internal nodes of the contraction tree.
We serialise this hierarchy in a bottom-up post-order, completing each child subtree before its parent cell, with admissible sibling-subtree orders defined in Supplementary Methods.
Following the selected order, we contract each cell $\sigma\in\mathcal{X}$ into a fresh site marked by the non-terminal symbol $N^*$, producing one rule token $r_i$ for the HGR string.
After $r_i$ is recorded, $N^{*}$ represents $\sigma$ in the current complex and graph, reducing the active structure while allowing this site to participate in subsequent contractions.
Repeating this contraction step reduces the structure to a single root site, which is marked by the start symbol ${S}$.

Each production rule $r_i:\mathcal{G}^L_i\to\mathcal{G}^R_i$ comprises a left-hand-side interface graph and a right-hand-side local graph (\cref{fig:HGR_frame}\textbf{d}). The ordered anchor interface specifies both whether the rule is applicable and how its local rewrite attaches to the surrounding graph. 
We define the anchor set as
\begin{equation}
    \mathcal{V}_{\mathrm{anc}}^\sigma
    =
    \{v\in\mathcal{V}\setminus\sigma
    \mid
    \exists u\in\sigma\text{ s.t. }(u,v)\in\mathcal{E}\}.
\end{equation}
Its canonical ordering and corresponding boundary-bond attributes define the ordered anchor interface. When $\sigma$ is contracted, this interface is retained at the site marked by $N^*$, thereby fixing the attachment correspondence for reverse derivation.
Accordingly, the left-hand side is the star-shaped interface graph
\begin{equation}
    \mathcal{G}^L_i
    =
    \left(
    \{N^*\}\cup\mathcal{V}_{\mathrm{anc}}^\sigma,
    \{(N^*,v)\mid v\in\mathcal{V}_{\mathrm{anc}}^\sigma\}
    \right),
\end{equation}
where the anchor ordering and boundary-bond attributes are retained on the incident edges.

Let $\widetilde{\mathcal{G}}=(\widetilde{\mathcal{V}},\widetilde{\mathcal{E}})$ denote the working graph at the current contraction step and $\widetilde{\sigma}\subseteq\widetilde{\mathcal{V}}$ the current representation of $\sigma$.
For $U\subseteq\widetilde{\mathcal{V}}$, we write $\widetilde{\mathcal{G}}[U]=(U,\{(u,v)\in\widetilde{\mathcal{E}}\mid u,v\in U\})$ for the subgraph induced by $U$, with all vertex and edge attributes inherited from $\widetilde{\mathcal{G}}$. The corresponding production rule has right-hand side
\begin{equation}
    \mathcal{G}^R_i
    =
    \widetilde{\mathcal{G}}
    \bigl[\widetilde{\sigma}\cup\mathcal{V}_{\mathrm{anc}}^\sigma\bigr].
\end{equation}
Thus, $\mathcal{G}^R_i$ records the local structure replaced by $N^*$: the atoms and child non-terminal sites currently representing $\sigma$, together with the boundary bonds connecting this representation to the ordered anchors.
Structurally identical local contractions reuse the same production rule in $\mathcal{P}$. 
Together, the rule order in the HGR string and the ordered child non-terminal sites in each $\mathcal{G}^R_i$ specify the derivation tree used for reverse reconstruction.

\paragraph{Reverse derivation and lossless reconstruction.}
Given the induced grammar $\Gamma$ and an HGR string $\bm{r}$, reverse derivation reconstructs the combinatorial complex and associated molecular graph (\cref{fig:HGR_frame}\textbf{f}).
Because the HGR string is stored in post-order, reverse derivation starts from ${S}$ and consumes $\operatorname{rev}(\bm{r})$, beginning with the root rule and proceeding towards the leaves.
At each step, the current rule $r_i$ expands the non-terminal site at the top of the decoder's last-in-first-out (LIFO) stack, which selects the rewrite site without global subgraph-isomorphism search.
The rule rewrites $\mathcal{G}^L_i$ into $\mathcal{G}^R_i$ and reconnects boundary bonds through the ordered anchor interface, which fixes the boundary correspondence even for symmetric intermediates.
Thus, reverse derivation losslessly restores both the input lifted complex $\mathcal{C}_{m}$ and its associated molecular graph $\mathcal{G}_{m}$ (Supplementary Proposition~\ref{SI-prop:lossless}).

\paragraph{Rule-token initialisation.}
For use with neural models, each production-rule token in the HGR string is mapped to an input vector.
Rather than one-hot rule identifiers, which treat rules as unrelated symbols, we initialise each rule token $r_i$ from a structure-aware Weisfeiler--Lehman (WL) histogram of its right-hand-side graph $\mathcal{G}^R_i$.
The WL histogram iteratively relabels each atom by combining its attributes with the sorted labels of its neighbours.
Pooling the resulting labels yields a fixed-length, permutation-invariant feature that generalises to rules unseen during training.
These WL features already give distinct initialisations to more than $98\%$ of production rules.
In HGR-FM, these features are augmented with a random-walk encoding, so the combined embedding distinguishes all rules in our grammars.
Full construction and model-specific initialisers are detailed in \cref{SI-sec:rule-init}.

\subsection*{HGR-based generative models}
\pdfbookmark[2]{HGR-based generative models}{sec:5.3}

With the induced grammar and its rule-token initialisation in hand, we build a grammar VAE over HGR strings, together with an optional latent-diffusion sampler over its latent space.

\paragraph{Grammar variational autoencoder (GVAE).}
The GVAE consists of an encoder--decoder pair that operates on grammar production rules rather than atomic tokens~\cite{GVAE+ICML2017}.
Given a molecular combinatorial complex $\mathcal{C}$, we first parse it into an HGR string $\bm{r}$ whose rule tokens are drawn from the production rule corpus $\mathcal{P}$.
Each rule token is then mapped to its WL-based embedding, so that the symbolic string $\bm{r}$ becomes a sequence of fixed-length feature vectors.
From this feature sequence, the encoder produces a latent variable $\bm{z} \in \mathbb{R}^d$, parameterising the variational posterior $q_{\bm{\phi}}(\bm{z} \mid \bm{r})$.
The latent prior is an isotropic Gaussian, $p(\bm{z}) = \mathcal{N}(\bm{0}, \bm{I})$.
Conditioned on $\bm{z}$, the decoder represents $p_{\bm{\theta}}(\bm{r}\mid\bm{z})$ using an autoregressive factorization over $\operatorname{rev}(\bm{r})$, beginning with the root rule and proceeding towards the leaves under grammar constraints.

The model is trained with a $\beta$-VAE style evidence lower bound over the dataset of HGR strings~\cite{Higgins2017betaVAE}:
\begin{equation}
\mathcal{L}_{\mathrm{GVAE}}(\bm{\theta}, \bm{\phi}; \bm{r}) =
\mathbb{E}_{q_{\bm{\phi}}(\bm{z}\mid\bm{r})}[\log p_{\bm{\theta}}(\bm{r}\mid\bm{z})]
-
\beta\,\mathrm{KL}\!\left(q_{\bm{\phi}}(\bm{z}\mid\bm{r})\,\|\,p(\bm{z})\right).
\end{equation}
The reconstruction term encourages recovery of the input HGR string, whereas the Kullback--Leibler (KL) divergence regularises the latent space; $\beta$ controls the trade-off between reconstruction fidelity and latent regularisation. Variable-length sequences are padded to a common length with a special padding token for mini-batch training.

\paragraph{Grammar-constrained decoding.}
To ensure syntactic validity, decoding is guided by a last-in-first-out (LIFO) stack of pending non-terminal sites.
At step $t$, the stack top selects the site $\alpha_t$ to expand.
A production rule is admissible only if its ordered anchor interface matches the interface at $\alpha_t$.
The decoder outputs an unnormalized logit vector $\bm{l}_t\in\mathbb{R}^{|\mathcal{P}|}$, and a binary mask $\bm{m}_{\alpha_t}\in\{0,1\}^{|\mathcal{P}|}$ encodes rule admissibility.
The probability of selecting rule $r_i$ given the preceding rules $\bm{r}_{<t}$, $\alpha_t$ and the latent variable $\bm{z}$ is
\begin{equation}
p(r_t=r_i\mid\bm{r}_{<t},\alpha_t,\bm{z})
=
\frac{\bm{m}_{\alpha_t,i}\exp(\bm{l}_{t,i})}
{\sum_j\bm{m}_{\alpha_t,j}\exp(\bm{l}_{t,j})}.
\end{equation}
After a rule is selected, its child non-terminal sites are pushed onto the stack in reverse of their stored order.
Decoding continues until the stack is empty or the dataset-specific maximum sequence length is reached.
Sequences with non-empty stacks at this limit are discarded as incomplete derivations.
Each completed derivation is therefore grammar-valid and satisfies the local valency and bonding constraints encoded by the induced rules, without post-hoc filtering for these constraints.

\paragraph{Latent diffusion sampling.}
To further improve sample quality, we model the GVAE latent distribution with a standard score-based diffusion model~\cite{NCSN+NeurIPS2019, Score-SDE+ICLR2021}. Latent codes are perturbed by a forward stochastic differential equation (SDE) and recovered by the corresponding reverse-time SDE, whose score function is trained by denoising score matching~\cite{Scorematching2011}.
We adopt this framework without modification and provide the full formulation in \cref{SI-sec:latent-diffusion}.

Consequently, latent codes are sampled either directly from the prior $p(\bm{z})$ or through the diffusion process.
In both cases, greedy autoregressive decoding maps each latent code to a grammar-admissible HGR rule sequence, after which deterministic reverse derivation reconstructs the corresponding combinatorial complex and associated molecular graph.
The resulting molecules preserve the hierarchical and higher-order topology encoded by HGR strings.

\subsection*{HGR-based foundation model (HGR-FM)}
\pdfbookmark[2]{HGR-based foundation model (HGR-FM)}{sec:5.4}

HGR-FM (\cref{fig:FM-rel}\textbf{a}) takes the HGR string $\bm{r}$ of a molecule as input and maps its rule tokens to initialised embeddings.
The resulting token embeddings are encoded by a Grammar Transformer whose self-attention uses the higher-order position bias defined below.
Contextualised atom-level token states feed atom-level pretext tasks, whereas the final hidden state of the molecule-level token serves as a fixed-length molecular fingerprint for higher-level tasks and downstream transfer.

\paragraph{Higher-order position encoding.}
Transformers do not explicitly exploit input structure by default~\cite{transformer-NeurIPS2017}, whereas appropriate structural inductive biases can improve their performance~\cite{Graphormer-NeurIPS2021,GRIT-ICML2023}.
To capture the hierarchical geometry together with pairwise connectivity, we introduce a composite structural encoding into self-attention.
For each attention head, the pre-softmax attention logit between tokens $i$ and $j$ takes the form
\begin{equation}
\text{Logit}_{i,j} =
\underbrace{
    \bigl(\mathcal{R}(\rk(\sigma_i))\bm{q}_i\bigr)^\top
    \bigl(\mathcal{R}(\rk(\sigma_j))\bm{k}_j\bigr)/\sqrt{d_h}
    + b_{\mathcal{C}}(\delta^{\mathcal{C}}_{ij})
    + b_{\rho}(\rho^{\mathcal{C}}_{ij})
}_{\text{Higher-order structural encoding}}
+
\underbrace{
    \mathbb{I}_{i,j\in\mathcal{G}}
    \left(
        \phi(\delta^{\mathcal{G}}_{ij}) + \psi_{ij}
    \right)
}_{\text{Pairwise structural encoding}}.
\end{equation}
Here, $d_h$ is the per-head dimension, $\sigma_i$ denotes the cell associated with token $i$, and $\mathcal{R}(\cdot)$ denotes Rotary Positional Embedding (RoPE)~\cite{ROPE-RoFormer-Su2024} applied to the query $\bm{q}_i$ and key $\bm{k}_j$ using their cell ranks.
The resulting inner product encodes their relative rank difference.
The learnable scalar biases $b_{\mathcal{C}}$ and $b_{\rho}$ are shared across all layers and are indexed by the grammar-tree distance $\delta^{\mathcal{C}}_{ij}$ and relation type $\rho^{\mathcal{C}}_{ij}$, respectively.
The relation types comprise self, parent, child, sibling, ancestor, descendant and other.
The pair $(\rk(\sigma_i), \delta^{\mathcal{C}}_{ij})$ forms a lossless signature of hierarchical structure for molecular complexes $\mathcal{C}$, enabling precise discrimination of topological relations such as siblings and cousins.

We also add pairwise structural biases using shortest-path spatial encoding and edge-path encoding~\cite{Graphormer-NeurIPS2021}.
Pairwise spatial encoding is implemented as a learnable bias $\phi(\delta^{\mathcal{G}}_{ij})$, indexed by the shortest-path distance $\delta^{\mathcal{G}}_{ij}$ on the molecular graph $\mathcal{G}$.
Edge-path encoding $\psi_{ij}$ is defined as the average of edge features aggregated along the shortest path $\text{SP}_{ij}$
\begin{equation}
    \psi_{ij}
= \frac{1}{|\text{SP}_{ij}|}\sum_{e\in \mathrm{SP}_{ij}} \bm{x}_{e}^{\top}\bm{w}^{\mathcal{E}}_{\text{pos}_{ij}(e)},
\end{equation}
where $\text{SP}_{ij}$ denotes edge sequence on the shortest path from $i$ to $j$,  $\text{pos}_{ij}(e)$
indicates the relative position of edge $e$ along this path, $\bm{x}_e$ is the feature of edge $e$, and $\bm{w}^{\mathcal{E}}_{n}$ is a learnable embedding for path position $n$.
Pairwise biases are applied only to atom-token pairs, and are set to zero otherwise.
Attention heads are partitioned across the additive encodings, so each head receives the content score together with a subset of these terms (\cref{SI-sec:HGR-FM}).
Together, these terms inject hierarchical cues and pairwise connectivity into attention while ensuring perturbation invariance.

\paragraph{Multi-level pretext tasks.}
HGR-FM is pretrained with four complementary pretext tasks spanning atom-, fragment-, molecular- and 3D-conformation-level supervision~\cite{MORE-AAAI2025}.
The atom-level task masks atom-level rule tokens and predicts their atomic attributes from contextualised token representations, encouraging fine-grained local chemical perception.
The fragment-level task predicts 1{,}024-bit Morgan~\cite{morgan_fp2010} (radius 2) and 166-bit Molecular ACCess System (MACCS)~\cite{maccs-durant2002} binary fingerprints that represent the presence or absence of specific functional groups or substructures, thereby encoding chemically meaningful local motifs.
The molecular-level task regresses molecular descriptors (e.g., molecular weight, logP), encouraging the model to internalize global physicochemical properties.
Finally, the 3D-level task predicts pairwise interatomic distances derived from optimised molecular conformers, enabling integration of spatial geometry.
These tasks are jointly optimised using the same task weights as MORE~\cite{MORE-AAAI2025}.
Together, the four tasks yield transferable representations spanning local, substructural, global-property and geometric information (see~\cref{SI-sec:pretext_tasks} for details).

\subsection*{Datasets and benchmarks}
\pdfbookmark[2]{Datasets and benchmarks}{sec:5.data}

\paragraph{RingDiv benchmark construction.}
We constructed RingDiv, a ring-enriched, topology-balanced benchmark comprising 1.18 million molecules, together with a curated subset, RingDiv300k, for rapid development and ablation studies. The ring-topology coverage of both datasets is characterised in \cref{fig:ringdiv-analysis}.
It was designed to broaden coverage of complex ring architectures while retaining balanced distributions of ring size and fusion topology in drug-like chemical space.
The source pool contained approximately 143 million compounds from PubChem~\cite{kim2016pubchem}, SureChEMBL~\cite{papadatos2016surechembl}, and five commercial screening libraries (eMolecules, Asinex, ChemDiv, Enamine and MolPort).
After merging, molecules were standardised as RDKit molecular objects; unparsable entries were discarded, and repeated source records were collapsed using InChI identifiers~\cite{heller2015inchi} generated from the standardised structures.
For each molecule, we performed ring perception and annotated ring-size and fusion-topology descriptors: ring sizes were binned as small (3--4 atoms), normal (5--7 atoms), medium (8--11 atoms), or macro ($\geq$ 12 atoms), and ring systems were categorised by connectivity pattern (spirocyclic, fused, condensed, or bridged).

We then constrained the chemical space to a drug-like regime by retaining compounds with 1--10 rings and molecular weights of 100--900 Da, and excluding molecules containing macrocycles.
Macrocycles are sparsely represented in the source databases and differ markedly from smaller ring systems in flexibility, synthetic accessibility, and physicochemical properties; their inclusion would introduce poorly supported outliers.
These per-molecule descriptors were used to guide topology-balanced sampling and to summarise ring-topology coverage in the final benchmark.

To ensure adequate representation of structurally informative but underrepresented motifs, we adopted a two-stage sampling strategy that combines topology balancing with fingerprint-based diversity selection.
First, a quota-first pass prioritised rare classes such as medium rings, spirocyclic and bridged architectures: molecules were sampled to satisfy minimum quotas for these classes, after which the remaining pool was randomly expanded into an oversampled candidate set while preserving the desired marginal topology balance.
Second, to maximise structural diversity and reduce near-duplicates, candidates were encoded using 2{,}048-bit Morgan~\cite{morgan_fp2010} fingerprints (radius 2), and a MaxMin diversity criterion was applied using Tanimoto similarity.
This step favours subsets that are globally well dispersed in fingerprint space while discouraging highly similar neighbours, using an approximate Tanimoto-similarity threshold of 0.65 without compromising the intended ring-topology coverage.
RingDiv300k was obtained from RingDiv by topology-balanced subset selection, using tighter quotas for rare ring categories rather than random subsampling.

\paragraph{Datasets for generation and transfer.}
We evaluated de novo molecular generation on five benchmarks: QM9~\cite{data:qm9}, ZINC250k~\cite{data:zinc250k}, MOSES~\cite{moses-2020}, GuacaMol~\cite{guacamol-JCIM2019} and RingDiv300k. The final two benchmarks were used for the main large-scale comparisons.
For foundation-model pretraining, we used three corpora: ZINC2m~\cite{data:zinc250k}, RingDiv and their combination (ZINC2m+RingDiv). ZINC2m was the primary setting used for baseline alignment, whereas RingDiv and the combined corpus were used to assess the effect of pretraining data.
Downstream transfer was evaluated on seven MoleculeNet classification benchmarks~\cite{moleculenet-wu2018}: BBBP, Tox21, ToxCast, SIDER, ClinTox, HIV and BACE, each split by scaffold into 80/10/10 training, validation and test partitions~\cite{MORE-AAAI2025}.

\subsection*{Experimental setup}
\pdfbookmark[2]{Experimental setup}{sec:5.expsetup}

\paragraph{Generation baselines and metrics.}
Generation baselines span sequence-, motif-, graph- and topology-aware molecular generators. 
For the main-text comparison on GuacaMol and RingDiv300k, we use the recent diffusion and higher-order-aware baselines DiGress~\cite{DiGress+ICLR2023}, HoG-Diff~\cite{hogdiff-ICLR2026}, DeFoG~\cite{DeFoG-ICML2025} and SELFIES-VAE~\cite{SELFIES-2020}. 
\cref{SI-sec:gen-rel} adds dataset-specific baselines where evaluated, including MiCaM~\cite{MiCaM-ICLR2023}, MoFlow~\cite{Moflow-SIGKDD2020}, EDP-GNN~\cite{EDPGNN-2020}, GraphEBM~\cite{GraphEBM2021}, GDSS~\cite{GDSS+ICML2022}, SMILES-VAE and MHG-VAE~\cite{MHG-VAE-ICML2019}. 
The sequence VAE controls are reported on GuacaMol and RingDiv300k, the two main larger-scale benchmarks, whereas MHG-VAE is reported on QM9 and ZINC250k because MHG induction scales poorly to larger collections.
HGR-VAE and HGR-LDF use the RSG-induced grammar unless otherwise stated. Baselines used released checkpoints or published hyperparameters where available; for new settings such as RingDiv300k, methods were re-tuned within method-specific search spaces.

Generated molecules are assessed along three axes: basic metrics (validity, uniqueness and novelty); chemistry-based distributional metrics, including Fr\'echet ChemNet Distance (FCD), KL score and physicochemical-property agreement; and topology-based metrics covering graph-kernel similarity, nearest-neighbour similarity, scaffold distributions and curvature-filtration descriptors~\cite{CurvFilt+NeurIPS2023}.
These curvature-filtration descriptors are reported as auxiliary distributional checks because of their high estimation variance (\cref{SI-sec:gen-metrics}).
All metrics except validity are computed over valid generated molecules relative to a held-out test set. Additional metrics, full definitions and implementation details are provided in \cref{SI-sec:gen-metrics}. 
Metric computation was also optimised to reduce evaluation overhead; representative speed-ups are shown in \cref{SI-fig:index_time_opt}.

\paragraph{Foundation-model baselines and transfer protocols.}
For downstream transfer, we used two protocols. 
Full fine-tuning jointly optimised the pretrained encoder and task-specific head.
Probing froze the encoder and trained only lightweight task-specific heads, isolating representation quality while limiting overfitting on small downstream tasks. 
Fingerprints from the frozen encoder were cached and reused across probing runs.
Baselines included representative molecular graph pretraining methods: Infomax~\cite{Informax-ICLR2019}, EdgePred~\cite{graphsage-edgepred-2017}, AttrMasking and ContextPred~\cite{ContextPred-AttrMasking-ICLR2020}, GraphCL~\cite{GraphCL-NeurIPS2020}, GraphLoG~\cite{GraphLoG-ICML2021}, GraphMAE~\cite{GraphMAE-KDD2022}, GraphMVP~\cite{GraphMVP-ICLR2022} and MORE~\cite{MORE-AAAI2025}. Transfer performance was measured using the area under the receiver operating characteristic curve (AUC), following the MoleculeNet convention.

\paragraph{Implementation details.}
Generation models (HGR-VAE, HGR-LDF) and HGR-FM were implemented in PyTorch. 
Complete model configurations, optimisers and training hyperparameters are listed in \cref{SI-sec:hyperparameters}.
Unless otherwise stated, results are computed over three random seeds. 
Error bars in \cref{fig:gen-rel} denote the standard deviation across seeds. 
For the foundation-model transfer experiments, \cref{fig:FM-rel} reports the mean across seeds, and per-task standard deviations are provided in \cref{SI-tab:FM_transfer_combined}.

\section*{Data availability}
RingDiv, RingDiv300k and other supporting data generated in this study will be publicly released upon publication.
Public benchmark datasets used in this study are available from their original sources.

\section*{Code availability}
The code used in this study is available through the \href{https://github.com/circle-group/HGR}{HGR GitHub repository}.

\section*{Acknowledgements}
We thank Odin Zhang for his valuable feedback and insightful discussions.
T. Birdal was supported by a UKRI Future Leaders Fellowship [grant number MR/Y018818/1].

\section*{Author contributions statement}
Y.H. conceived the study. 
Y.H., Y.Z., V.P.D. and T.B. designed the study. 
Y.H. and Y.Z. performed the research. 
J.W. collected data and performed cheminformatics analyses.
Y.H., V.P.D., J.L. and T.B. analysed the results. 
S.F. led the visualisation.
All authors contributed to writing and reviewing the paper.

\section*{Competing interests}
The authors declare no competing interests.

\clearpage
\begin{center}
{\Large \textbf{Supplementary Information}}
\end{center}

\setcounter{section}{0}
\setcounter{figure}{0}
\setcounter{table}{0}
\renewcommand{\thesection}{\arabic{section}}
\renewcommand{\thefigure}{\arabic{figure}}
\renewcommand{\thetable}{\arabic{table}}
\renewcommand{\figurename}{Supplementary Fig.}
\renewcommand{\tablename}{Supplementary Table}
\crefalias{section}{suppsec}
\crefalias{subsection}{suppsec}
\crefalias{subsubsection}{suppsec}
\crefalias{figure}{suppfig}
\crefalias{table}{supptab}

\ifdefined\startcontents
  \startcontents[sup]

  \titlecontents{section}[0em]{\addvspace{0.6pc}\bfseries}{\thecontentslabel\quad}{}{\titlerule*[0.8pc]{.}\contentspage}

  \titlecontents{subsection}[2em]{\addvspace{0.3pc}}{\thecontentslabel\quad}{}{\titlerule*[0.8pc]{.}\contentspage}

  \begingroup
    \value{tocdepth}=2
    \printcontents[sup]{}{1}{\section*{Supplementary Contents}}
  \endgroup
\fi

\clearpage

\section{Model and algorithm details}

\subsection{Notation and terminology}
We begin by summarising the key concepts and notation used in this paper.

\begin{table}[h!]
\centering
\caption{Notation summary.}
\small
\begin{tabularx}{0.95\textwidth}{@{} l X @{}}
\toprule
\textbf{Symbol} & \textbf{Description} \\
\midrule
$\mathcal{G} = (\mathcal{V}, \mathcal{E})$ &
Molecular graph with vertex set $\mathcal{V}$ (atoms) and edge set $\mathcal{E}$ (bonds). \\
$U\sim_{\mathcal{G}}W$ &
Adjacency between vertex subsets: $\exists (u,w)\in\mathcal{E}$ with $u\in U$ and $w\in W$. \\
$\mathcal{C}=(\mathcal{V},\mathcal{X},\rk)$ &
Combinatorial complex with cell family $\mathcal{X}$ (atom and lifted motif cells) and rank function $\rk$; the lifting variant is written $\mathcal{C}_{\eta}$, $\eta\in\{\mathrm{MIG},\mathrm{RSG}\}$, and dropped when clear. \\
$\mathcal{M}$ &
Motif vocabulary mined from the molecule library to score merge candidates in lifting. \\ 
$\Gamma = (\Sigma, \mathcal{N}, S, \mathcal{P})$ &
Higher-order grammar with terminal set $\Sigma$, non-terminal set $\mathcal{N}$, start symbol $S \in \mathcal{N}$, and production-rule corpus $\mathcal{P}$. \\
$N^{*} \in \mathcal{N}$ & A generic non-terminal symbol marking replaceable sites; $N_k^*$ denotes the site marked by $N^*$ and created at parsing step $k$. \\
$\mathcal{P} = \{ r_i \mid i = 1, \cdots, |\mathcal{P}| \}$ &
Production-rule corpus. \\
$\sigma,\ \mathcal{V}_{\mathrm{anc}}^\sigma$ &
Cell being contracted and its anchor set in the original molecular graph. \\
$r_i:\mathcal{G}^L_i\to\mathcal{G}^R_i$ &
Production rule with a left-hand-side interface graph $\mathcal{G}^L_i$ and a right-hand-side local graph $\mathcal{G}^R_i$. We omit the subscript $i$ when referring to a generic rule. \\
$(\mathcal{C}_{m},\mathcal{G}_{m}) \rightarrow \bm{r}^{(m)}= [r_{i_1}, \ldots, r_{i_{\ell_{m}}}] $ &
HGR string obtained by parsing $\mathcal{C}_{m}$ paired with $\mathcal{G}_{m}$. The index $m$ runs over molecules, $k$ indexes positions in a string, and $i_k$ identifies the rule in $\mathcal{P}$ at position $k$. \\
$S \Rightarrow^{\bm{r}^{(m)}} (\mathcal{C}_{m},\mathcal{G}_{m})$ &
Derivation that reconstructs $(\mathcal{C}_{m},\mathcal{G}_{m})$ from $S$ by consuming the reversed sequence $\operatorname{rev}(\bm{r}^{(m)})$. \\
\bottomrule
\end{tabularx}
\label{SI-tab:notations}
\end{table}

We use the following grammar-operation terminology throughout the Supplementary Information.
Let $\mathcal{D}$ denote the collection of input data.
\textbf{Induction} constructs a higher-order grammar $\Gamma$ from $\mathcal{D}$.
A \textbf{rewriting} or \textbf{rule-application} step, written $A \Rightarrow B$, replaces one non-terminal site in $A$ with the right-hand side of an applicable production rule, yielding $B$.
We write $A \Rightarrow^* B$ when $B$ is obtained from $A$ by a finite sequence of such rule applications; this process is called a \textbf{derivation}.
In HGR, derivation proceeds in the generative direction.
An HGR string $\bm{r}^{(m)}$ is stored in post-order, so reconstruction from $S$ consumes $\operatorname{rev}(\bm{r}^{(m)})$. We use $S \Rightarrow^{\bm{r}^{(m)}}(\mathcal{C}_{m},\mathcal{G}_{m})$ as shorthand for the derivation represented by $\bm{r}^{(m)}$.
For a target lifted complex and associated molecular graph $(\mathcal{C},\mathcal{G})\in\mathcal{D}$, \textbf{parsing} is the inverse task of finding an HGR string $\bm{r}$ such that $S \Rightarrow^{\bm{r}}(\mathcal{C},\mathcal{G})$.

\subsection{Lifting algorithms}
\label{SI-sec:lifting-alg}

The MIG lifting procedure, summarised in Methods and detailed in \cref{SI-alg:mig_lifting}, performs hierarchical subgraph merging with anchor resolution.
The algorithm initialises singleton atom cells and a current partition over the molecular graph $\mathcal{G}$.
At each round, it scores adjacent pairs of current partition blocks by frequency, commits a maximal non-overlapping set of highest-scoring merges, and appends the resulting merged cells to the merge history $\mathcal{H}$.
The lifted complex $\mathcal{C}_{\mathrm{MIG}}$ is formed as $(\mathcal{V},\operatorname{cells}(\mathcal{H}),\rk_{\mathrm{MIG}})$, where $\operatorname{cells}(\mathcal{H})$ collects the cells recorded in $\mathcal{H}$, with ranks given by cell height above the atom leaves; $\mathcal{G}$ supplies adjacency for candidate generation and anchor resolution.
If the call to \textsc{ResolveAmbiguity} is omitted, \cref{SI-alg:mig_lifting} reduces to the shared frequency-guided merging step.
With this subroutine, it becomes the MIG lifting variant: whenever a candidate merged cell has more than two anchors, the procedure grows the cell by best-first absorption of neighbouring partition blocks until the anchor interface has at most two anchors.
The RSG variant uses the same primitives, but stages admissible merges around ring systems (Methods), so we do not repeat its pseudocode.

\begingroup
\normalcolor
\begin{algorithm}[!ht]
\caption{MIG lifting}
\label{SI-alg:mig_lifting}
\textbf{Input:}
  Molecular graph $\mathcal{G}=(\mathcal{V},\mathcal{E})$; motif vocabulary $\mathcal{M}$ with library frequencies $\mathrm{freq}(\cdot)$\\
\textbf{Return:}
  Lifted complex $\mathcal{C}_{\mathrm{MIG}}$
\begin{algorithmic}[1]
\Procedure{HierarchicalMerge}{$\mathcal{G},\mathcal{M}$}
  \State $\mathcal{B}\gets\{\,\{v\}\mid v\in\mathcal{V}\,\}$;\quad
         $\mathcal{H}\gets[\,\{v\}\mid v\in\mathcal{V}\,]$;\quad
         $\rk_{\mathrm{MIG}}(\{v\})\gets0$ \Comment{Initial partition blocks, trace and ranks}
  \While{there exist $B_i,B_j\in\mathcal{B}$ with $B_i\sim_{\mathcal{G}}B_j$} \Comment{Continue while a merge is possible}
    \State $\mathcal{U}_{\mathrm{raw}}\gets\{\,B_i\cup B_j:B_i,B_j\in\mathcal{B},\ B_i\sim_{\mathcal{G}}B_j\,\}$ \Comment{Raw adjacent merge candidates}
    \State $\mathcal{U}_{\mathrm{res}}\gets\{\,\textsc{ResolveAmbiguity}(\mathcal{G},U,\mathcal{B},\mathcal{M}):U\in\mathcal{U}_{\mathrm{raw}}\,\}$ \Comment{Anchor-resolved candidates}
    \State $\mathcal{U}_{\mathrm{sel}}\gets$ a maximal non-overlapping subset of the highest-frequency candidates in $\mathcal{U}_{\mathrm{res}}$
    \ForAll{$U^*\in\mathcal{U}_{\mathrm{sel}}$}
      \State $\rk_{\mathrm{MIG}}(U^*)\gets1+\max\{\,\rk_{\mathrm{MIG}}(B):B\in\mathcal{B},\ B\cap U^*\ne\emptyset\,\}$
      \State $\mathcal{B}\gets\{B\in\mathcal{B}:B\cap U^*=\emptyset\}\cup\{U^*\}$ \Comment{Update current partition}
      \State $\mathcal{H}\gets\mathrm{Append}(\mathcal{H},U^*)$ \Comment{Record the lifted cell}
    \EndFor
  \EndWhile
  \State \Return $\mathcal{C}_{\mathrm{MIG}}=(\mathcal{V},\operatorname{cells}(\mathcal{H}),\rk_{\mathrm{MIG}})$
\EndProcedure
\Statex
\Procedure{ResolveAmbiguity}{$\mathcal{G},U_0,\mathcal{B},\mathcal{M}$}
  \State $\operatorname{Anc}_{\mathcal{G}}(U):=\{v\in\mathcal{V}\setminus U:\exists u\in U\ \text{s.t.}\ (u,v)\in\mathcal{E}\}$.
  \State initialise max-heap $\mathsf{PQ}$ with $U_0$, keyed by $\mathrm{freq}(\cdot)$;\quad
         $\mathsf{Visited}\gets\{U_0\}$ \Comment{Best-first frontier}
  \While{$\mathsf{PQ}\ne\emptyset$}
    \State $U\gets\textsc{PopMax}(\mathsf{PQ})$
    \If{$|\operatorname{Anc}_{\mathcal{G}}(U)|\le2$}
      \State \Return $U$ \Comment{Resolved interface}
    \EndIf
    \State $\mathcal{Q}_{U}\gets\{\,U\cup B:B\in\mathcal{B},\ B\cap U=\emptyset,\ B\sim_{\mathcal{G}}U\,\}\setminus\mathsf{Visited}$
    \State insert all $U'\in\mathcal{Q}_{U}$ into $\mathsf{PQ}$;\quad
           $\mathsf{Visited}\gets\mathsf{Visited}\cup\mathcal{Q}_{U}$
  \EndWhile
\EndProcedure
\end{algorithmic}
\end{algorithm}
\endgroup

\subsection{Grammar induction}

Grammar induction constructs a shared HGR corpus by parsing lifted complexes and their associated molecular graphs into HGR strings. Here we describe the contraction hierarchy, parsing algorithm, library-level rule induction, reverse derivation and reconstruction guarantees.
In this construction, $\mathcal{C}$ supplies the lifted cell hierarchy, whereas $\mathcal{G}$ supplies the pairwise bonds and anchor interfaces required for rule construction.
For the cell families produced by our lifting algorithms, any two cells are either disjoint or nested, so set inclusion defines a contraction tree. Singleton atom cells are leaves, and a larger cell is the parent of a contained cell when no intermediate cell exists between them in $\mathcal{X}$.
The rank $\rk(\sigma)$ measures the height of $\sigma$ above the atom leaves and is defined as the maximum number of edges from $\sigma$ to an atom leaf. It is therefore determined by $\mathcal{X}$.
We serialise all cells, including singleton atom cells, in post-order. We call a serialisation \emph{admissible} when each child subtree forms a contiguous block before its parent; sibling subtrees may occur in any order.
Different admissible serializations may produce alternative HGR strings for the same lifted complex. Each string remains lossless under the corresponding reverse derivation and may serve as optional data augmentation.

\paragraph{Parsing and rule induction.}
The parsing procedure for a single lifted complex is summarised in \cref{SI-alg:parse_complex}. For the current working graph $\widetilde{\mathcal{G}}$ and a vertex set $U$, $\widetilde{\mathcal{G}}[U]$ denotes the subgraph induced by $U$.
The hatted graphs $\widehat{\mathcal{G}}^L_k$ and $\widehat{\mathcal{G}}^R_k$ are the candidate left- and right-hand sides constructed at parsing step $k$. \textsc{LookupOrInsert} maps this pair to a compatible corpus rule $r_{i_k}:\mathcal{G}^L_{i_k}\to\mathcal{G}^R_{i_k}$, inserting the candidate if necessary, and returns its index $i_k$.
Compatibility requires structural isomorphism that respects the ordered anchor interface and child-site correspondence.
Across a molecule library, \textsc{ParseComplex} is applied to each lifted complex and its associated molecular graph.

\begin{figure}
    \centering
    \includegraphics[width=1\linewidth]{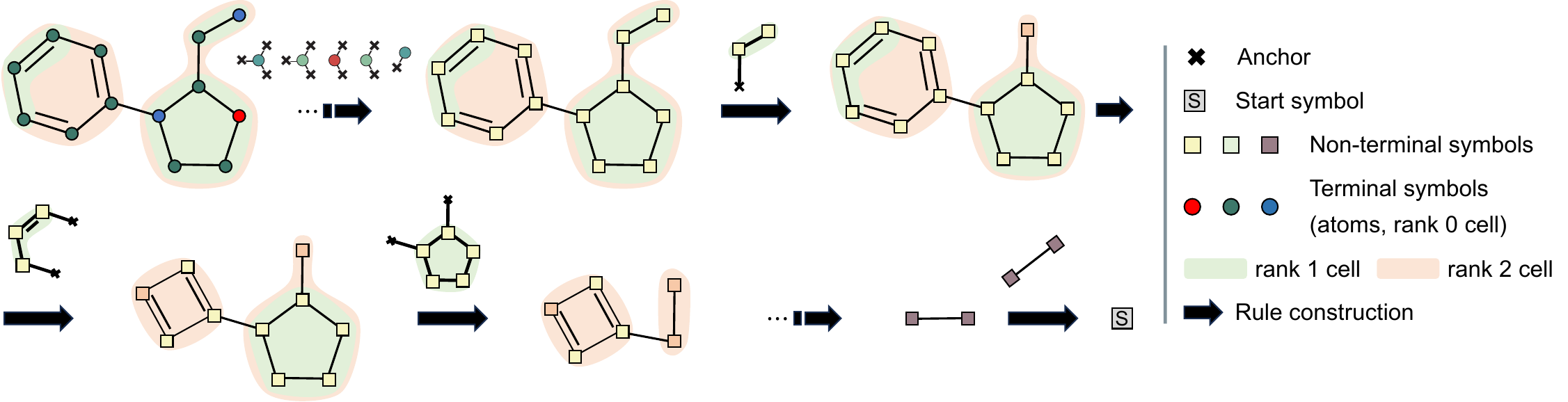}
\caption{ \textbf{Illustration of the grammar induction process.}
Cells are successively contracted into non-terminal sites, shown as squares, until only the start symbol $S$ remains. Applying the recorded rules in reverse order reconstructs the lifted complex and its associated molecular graph, whereas grammar-constrained derivation supports generation.
}
\label{SI-fig:grammar_induction}
\end{figure}

\begin{algorithm}[ht]
\caption{\textsc{ParseComplex}: bottom-up parsing of a lifted complex into an HGR string}
\label{SI-alg:parse_complex}
\textbf{Input:} Lifted complex $\mathcal{C}=(\mathcal{V},\mathcal{X},\rk)$;\ molecular graph $\mathcal{G}=(\mathcal{V},\mathcal{E})$;\ rule corpus $\mathcal{P}$\\
\textbf{Return:} HGR string $\bm{r}$;\ updated corpus $\mathcal{P}$
\begin{algorithmic}[1]
\State $\bm{r}\gets[\,]$
\State $\widetilde{\mathcal{G}}\gets\mathcal{G}$ \Comment{Working copy of $\mathcal{G}$; $\mathcal{X}$ and $\rk$ remain fixed}
  \State choose a post-order serialisation $(t_1,\dots,t_L)$ of the contraction tree \Comment{Sibling-subtree order may vary}
\For{$k=1$ to $L$}
  \State $\sigma\gets\mathrm{cell}(t_k)$
  \State $\widetilde{\sigma}_k\gets$ current representation of $\sigma$ in $\widetilde{\mathcal{G}}$
  \State $\mathcal{V}_{\mathrm{anc}}^{\sigma}\gets\{\,v\in\mathcal{V}\setminus\sigma:\exists u\in\sigma,\ (u,v)\in\mathcal{E}\,\}$ \Comment{Ordered anchors}
  \State $\widehat{\mathcal{G}}^R_k\gets\widetilde{\mathcal{G}}\bigl[\widetilde{\sigma}_k\cup\mathcal{V}_{\mathrm{anc}}^{\sigma}\bigr]$ \Comment{Candidate RHS with ordered child sites}
  \State let $N_k^{*}$ be a fresh non-terminal site
  \State $\widehat{\mathcal{G}}^L_k\gets(\{N_k^{*}\}\cup\mathcal{V}_{\mathrm{anc}}^{\sigma},\ \{(N_k^{*},v):v\in\mathcal{V}_{\mathrm{anc}}^{\sigma}\})$ \Comment{Candidate LHS}
  \State $i_k\gets\textsc{LookupOrInsert}\bigl(\mathcal{P},(\widehat{\mathcal{G}}^L_k\!\to\!\widehat{\mathcal{G}}^R_k)\bigr)$ \Comment{Corpus-rule index}
  \State $\bm{r}\gets\bm{r}\mathbin{\|}[r_{i_k}]$ \Comment{$\|$ denotes sequence concatenation}
  \State contract $\widetilde{\sigma}_k$ to $N_k^{*}$ in $\widetilde{\mathcal{G}}$ \Comment{Preserve boundary incidences at the ordered anchors}
\EndFor
\State mark the root site $N_L^{*}$ with the start symbol $S$
\State \Return $\bm{r},\mathcal{P}$
\end{algorithmic}
\end{algorithm}

After all complexes have been parsed, the shared corpus $\mathcal{P}$ defines the grammar $\Gamma=(\Sigma,\mathcal{N},S,\mathcal{P})$. Each molecule is assigned an HGR string $\bm{r}^{(m)}$, whose rule tokens refer to rules in $\mathcal{P}$.
Multiple cell occurrences can therefore contribute tokens that refer to the same production rule. Anchor resolution during lifting further reduces rule fragmentation.
Corpus growth and rule reuse are quantified on ZINC250k (Fig.~2\textbf{b},\textbf{c}) and QM9 (\cref{SI-fig:qm9_grammar_statistics}).
Each HGR string and its referenced rules define a lossless inverse process.

\paragraph{Reverse derivation and exact reconstruction.}
Given an HGR string $\bm{r}=[r_{i_1},\dots,r_{i_L}]$, reverse derivation starts from $S$ and consumes $\operatorname{rev}(\bm{r})=[r_{i_L},\dots,r_{i_1}]$. The decoder maintains a LIFO stack of pending non-terminal sites, initialised as $[S]$. At reverse step $k=L,\dots,1$, it pops the next site and applies the corpus rule $r_{i_k}$. The ordered anchor interface reconnects boundary bonds, after which child non-terminal sites are pushed in reverse of their stored order. Decoding succeeds only if the stack is empty after $r_{i_1}$ has been applied.

Exact reconstruction relies on the ordered rule sequence and, within each rule, the right-hand-side graph with its ordered child non-terminal sites and the ordered anchor interface. Together, these records specify the derivation tree, cell hierarchy and boundary correspondence. The rank function need not be stored separately because it is recovered as cell height in the contraction tree.
For this guarantee, the reconstructed object is the pair $(\mathcal{C},\mathcal{G})$, where $\mathcal{C}$ is the lifted complex and $\mathcal{G}$ is the associated molecular graph.
For an admissible serialisation $\tau$, let $\operatorname{Parse}_{\Gamma,\tau}(\mathcal{C},\mathcal{G})$ denote the HGR string returned by \textsc{ParseComplex}. Let $\operatorname{Der}_{\Gamma}(\bm{r})$ denote the pair produced by reverse derivation.
For claims on isomorphism classes, atom order is fixed by the same canonical molecular labelling used for serialisation; otherwise the statement is read for labelled complexes.

\begin{proposition}[Exact reconstruction]
\label{SI-prop:lossless}
Let $\eta\in\{\mathrm{MIG},\mathrm{RSG}\}$ denote either lifting variant.
For any admissible serialisation $\tau$, reverse derivation exactly recovers any lifted complex $\mathcal{C}_\eta$ and its associated molecular graph $\mathcal{G}$ after parsing by \textnormal{\textsc{ParseComplex}},
\begin{equation*}
\operatorname{Der}_{\Gamma}\!\left(\operatorname{Parse}_{\Gamma,\tau}(\mathcal{C}_\eta,\mathcal{G})\right)=(\mathcal{C}_\eta,\mathcal{G}).
\end{equation*}
\end{proposition}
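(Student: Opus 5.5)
We argue by induction on the contraction tree, pairing each parsing step with the reverse-derivation step that consumes the same rule token. Fix an admissible serialisation $\tau=(t_1,\dots,t_L)$. Let $\widetilde{\mathcal{G}}_k$ be the working graph after the $k$-th contraction in \textsc{ParseComplex}, with $\widetilde{\mathcal{G}}_0=\mathcal{G}$ and $\widetilde{\mathcal{G}}_L$ the single site $S$.

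The central claim is stronger than the statement: for every $k$, reverse derivation, after consuming $r_{i_L},\dots,r_{i_{k+1}}$ from $\operatorname{rev}(\bm{r})$, produces a graph isomorphic to $\widetilde{\mathcal{G}}_k$. This isomorphism must respect three things:
\begin{itemize}
\item the identity of every terminal atom and bond;
\item the correspondence between pending non-terminal sites and the uncontracted cells they represent;
\item the ordered anchor incidences at each pending site.
\end{itemize}
In addition, the LIFO stack, read from top to bottom, must list the pending sites in the order in which post-order will next need them, i.e.\ in reverse parsing order. Taking $k=0$ gives $\mathcal{G}$. The cells of $\mathcal{C}_\eta$ are then recovered as the atom sets spanned by each expanded site, and $\rk_\eta$ is recovered as cell height. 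Both are well defined because the lifted cells are laminar (disjoint or nested), so inclusion defines the contraction tree.

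The inductive step from $k$ to $k-1$ is a local check. The site on top of the stack is $N_k^*$, the site created by contracting $\widetilde{\sigma}_k$. This holds because of admissibility: the children of the cell $t_k$ occupy contiguous blocks immediately preceding it, so in reverse they are exactly the next sites to be expanded. Pushing child sites in reverse of their stored order therefore places them in the order that $\operatorname{rev}(\bm{r})$ consumes them.

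Applying $r_{i_k}$ at $N_k^*$ replaces the star graph $\mathcal{G}^L_{i_k}$ by $\mathcal{G}^R_{i_k}$. By the definition $\widehat{\mathcal{G}}^R_k=\widetilde{\mathcal{G}}_{k-1}[\widetilde{\sigma}_k\cup\mathcal{V}^{\sigma}_{\mathrm{anc}}]$, this is exactly the induced subgraph that was removed. Edges of $\widetilde{\mathcal{G}}_{k-1}$ fall into three classes:
\begin{itemize}
\item edges not meeting $\widetilde{\sigma}_k$ are untouched by both operations;
\item edges inside $\widetilde{\sigma}_k$ are restored from $\mathcal{G}^R$;
\item boundary edges are restored through the ordered anchor interface, which contraction preserved at $N_k^*$.
\end{itemize}
Hence the result is isomorphic to $\widetilde{\mathcal{G}}_{k-1}$ in the sense required.

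The expected main obstacle is that corpus rules are stored only up to \textsc{LookupOrInsert} compatibility. The rule $r_{i_k}$ retrieved from $\mathcal{P}$ may have been created from a different molecule, so it need not be literally $\widehat{\mathcal{G}}^L_k\to\widehat{\mathcal{G}}^R_k$, only isomorphic to it. We must show that this isomorphism is forced to agree with the original on the boundary and on the child sites. We handle this by using that compatibility means isomorphism preserving both the anchor order and the child-site order. Gluing along ordered anchors is then determined uniquely, even when the intermediate graph has nontrivial automorphisms, and so the composite map identifies each child site with the correct subtree.

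Anchors are ordered canonically. For labelled complexes, atom identities are carried along by the anchors. For isomorphism classes, the fixed canonical labelling makes the result equal as stated. A final check covers termination: the stack empties exactly after $r_{i_1}$, because each site is pushed once and popped once, and the number of tokens equals the number of cells $L$. Decoding therefore succeeds and returns $(\mathcal{C}_\eta,\mathcal{G})$.

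Nothing in this argument uses the specific merging rules of MIG or RSG beyond laminarity of the cell family. The statement therefore holds for both variants $\eta\in\{\mathrm{MIG},\mathrm{RSG}\}$.
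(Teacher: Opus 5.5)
Your proposal is correct and follows essentially the same route as the paper. Both run a downward induction over $k=L,\dots,1$ that matches the decoder state after consuming $r_{i_L},\dots,r_{i_{k+1}}$ with the parser state after the $k$th contraction, start from the common base $S$, and recover $\rk_\eta$ as height in the laminar contraction tree; your explicit LIFO-stack invariant and your handling of corpus rules that match only up to an anchor-order- and child-order-preserving isomorphism simply spell out bookkeeping the paper's proof leaves implicit.
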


\begin{proof}
At parsing step $k$, \textsc{ParseComplex} appends the rule token $r_{i_k}$ and replaces $\widetilde{\sigma}_k$ with a fresh non-terminal site $N_k^*$. Let $(\mathcal{C}^{(k)},\mathcal{G}^{(k)})$ represent the intermediate parsing state after this contraction. Initially, $(\mathcal{C}^{(0)},\mathcal{G}^{(0)})=(\mathcal{C},\mathcal{G})$, whereas the final state $(\mathcal{C}^{(L)},\mathcal{G}^{(L)})$ is represented by $S$. Let $(\widehat{\mathcal{C}}^{(k)},\widehat{\mathcal{G}}^{(k)})$ denote the decoder state after applying $r_{i_L},\dots,r_{i_{k+1}}$ in reverse derivation.

The ordered rule tokens and child non-terminal sites determine the derivation tree, while the ordered anchor interfaces fix the boundary correspondences. If the parsing and decoder states agree after step $k$, applying $r_{i_k}$ restores exactly the cell representation, child sites, and bonds removed by the $k$th contraction. Therefore,
\begin{equation*}
(\widehat{\mathcal{C}}^{(k)},\widehat{\mathcal{G}}^{(k)})
=(\mathcal{C}^{(k)},\mathcal{G}^{(k)})
\Longrightarrow
(\widehat{\mathcal{C}}^{(k-1)},\widehat{\mathcal{G}}^{(k-1)})
=(\mathcal{C}^{(k-1)},\mathcal{G}^{(k-1)}).
\end{equation*}
The final parsing state and initial decoder state at $k=L$ are both represented by $S$, establishing the induction base.
Downward induction over $k=L,\dots,1$ therefore restores the cell-inclusion hierarchy and the associated molecular graph. Because $\rk(\sigma)$ is defined as the height of $\sigma$ in the contraction tree, the restored tree uniquely determines $\rk$. Hence, $(\widehat{\mathcal{C}}^{(0)},\widehat{\mathcal{G}}^{(0)})=(\mathcal{C},\mathcal{G})$, completing the reconstruction.
\end{proof}

\paragraph{Grammar-constrained generation and validity.}
For generation, the reverse-derivation decoder predicts rules in the order corresponding to $\operatorname{rev}(\bm{r})$, beginning with the root rule. At each step, the stack top selects the pending non-terminal site to expand. The admissibility mask retains only rules whose left-hand-side ordered anchor interface matches the interface at that site. After expansion, newly introduced child non-terminal sites are pushed onto the stack. Reversing the completed output recovers the post-order HGR string. A sequence is accepted only when all selected rules have been applied and the stack is empty. Every cell and bond in the resulting object then comes from an applied rule, while cell ranks follow from the completed derivation tree. This establishes grammar-level completion; the chemical scope of ``valid by construction'' requires the rule-origin condition stated next.

Suppose that every applied rule was induced from a valence-consistent molecular graph. Each rule records its right-hand-side graph, bond attributes and ordered anchor interface. Derivation therefore introduces only atoms, bonds and local attachment patterns recorded by these rules. Restricting rule application to a matching ordered anchor interface preserves the local valence and bonding constraints encoded by the grammar. Under these conditions, every completed derivation is grammar-valid by construction. This guarantee does not extend to properties absent from the grammar, including stereochemical realizability and synthesizability, which remain empirical outcomes.

\subsection[Structure-aware rule initialisation]{Structure-aware rule initialisation}
\label{SI-sec:rule-init}

In the generation model, each rule token is initialised from a combination of Weisfeiler–Lehman (WL) structure-aware features and a learned per-rule embedding. For the molecular foundation model, we use a fully structure-derived initialisation that combines these WL features with a complementary Random Walk Positional Encoding (RWPE).

\paragraph{Weisfeiler–Lehman histogram embedding.}

For each production rule $r_i \in \mathcal{P}$, we construct a fixed-length, order-invariant feature vector by aggregating WL labels over its right-hand-side graph $\mathcal{G}^R_i=(V_i, E_i)$. The WL refinement proceeds for $H$ iterations.

At the initialisation step ($t = 0$), each node is assigned a canonical attribute label derived from stable atomic properties (e.g., symbol, aromaticity, degree, ring membership) and structural markers (e.g., anchor flags). These tokens are lexicographically sorted and concatenated to ensure permutation invariance.
In subsequent rounds $(t = 1, \dots, H-1)$, node labels are iteratively refined by concatenating each node’s previous label with the sorted multiset of its neighbours’ labels, following the standard 1-WL refinement scheme.

We define a label universe $\Lambda^{(t)} = \{\lambda^{(t)}_1, \dots, \lambda^{(t)}_{|\Lambda^{(t)}|}\}$ for each iteration $t$, which contains all distinct WL labels observed across rules and an $\langle \text{UNK}\rangle$ to handle unseen structures during inference.
Each rule $r_i$ is then represented by a normalised histogram of label occurrences:
\begin{equation*}
h_{i,j}^{(t)} = \frac{1}{|V_i|} \sum_{v \in V_i} \mathds{1}\bigl\{\ell_v^{(t)} = \lambda_j^{(t)}\bigr\},
\quad j = 1, \dots, |\Lambda^{(t)}|,
\end{equation*}
where $\ell_v^{(t)}$ denotes the WL label assigned to node $v$ in rule $r_i$ at refinement round $t$, and $\mathds{1}\{\cdot\}$ is the indicator function.
Stacking all $h_i^{(t)}$ across rules yields a sparse matrix $ \mathbf{S}^{(t)} \in \mathbb{R}^{|\mathcal{P}| \times |\Lambda_t|}$,
whose $i$-th row corresponds to the histogram of rule $r_i$.

The cumulative WL feature for rule $r_i$ is obtained by concatenating the histograms from all refinement rounds:
\begin{equation*}
\mathbf{z}_i^{\mathrm{WL}} = \bigl[ \mathbf{S}_i^{(0)} \mathbin{\|} \mathbf{S}_i^{(1)} \mathbin{\|} \dots \mathbin{\|} \mathbf{S}_i^{(H-1)} \bigr],
\end{equation*}
where $\mathbf{S}_i^{(t)}$ denotes the $i$-th row of $\mathbf{S}^{(t)}$.
This embedding is invariant to node and neighbourhood ordering, supports sparse storage and large-scale GPU computation, and extends to previously unseen rules.
In our grammars, WL embedding already yields distinct initial features for more than $98\%$ of production rules, indicating strong identifiability.

\paragraph{Random walk-based rule features.}
To further increase the discriminative power of rule initialisation beyond the WL histogram, we augment each production rule with a random walk positional encoding (RWPE)~\cite{RWPE-ICLR2022, HoRW-IS2024} computed on the rule’s right-hand side graph.

For each rule $r_i$, we first extract its adjacency matrix $A\in\mathbb{R}^{|V_i|\times |V_i|}$ based on its RHS graph $\mathcal{G}^R_i=(V_i, E_i)$. We then construct a random walk transition matrix
\begin{equation*}
P = D^{-1}A,\quad D_{ii}=\sum_j A_{ij},
\end{equation*}
where $D$ is the degree matrix.
The structural signature of a node $v$ is defined as the vector of return probabilities over $K$ steps:
\begin{equation*}
\mathbf{p}_v = \big[(P)_{vv}, (P^2)_{vv}, \ldots, (P^K)_{vv}\big]\in\mathbb{R}^K.
\end{equation*}
We compute these values iteratively via $P^{t+1} = P^t P$ and assemble them into a node feature matrix $\mathbf{R} \in \mathbb{R}^{n \times K}$, where the $v$-th row corresponds to $\mathbf{p}_v$.

To ensure invariance to node permutation, we impose a canonical ordering on the nodes by performing a lexicographical sort on the rows of $\mathbf{R}$. To map graphs of varying sizes to a fixed-dimensional space, we truncate or pad the sorted matrix to a maximum size $N_{\max}$. The processed matrix is then flattened to form the rule-level RWPE vector:
\begin{equation*}
\mathbf{z}_i^{\mathrm{RW}} = \mathrm{vec}\Bigl(\mathrm{PadOrTrunc}\bigl(\mathrm{LexSort}(\mathbf{R}); N_{\max}\bigr)\Bigr) \in \mathbb{R}^{N_{\max} K}.
\end{equation*}

This encoding relies solely on graph topology, providing a complementary signal to the attribute-based WL features. The final initialised embedding $\mathbf{z}_i$ for rule $r_i$ is the concatenation of both components:
\begin{equation*}
\mathbf{z}_i = \bigl[ \mathbf{z}_i^{\mathrm{WL}} \mathbin{\|} \mathbf{z}_i^{\mathrm{RW}} \bigr].
\end{equation*}
The inclusion of $\mathbf{z}_i^{\mathrm{RW}}$ provides an additional topological signal that can distinguish structurally distinct rules with identical WL colourings.

\subsection[HGR-based de novo generation]{HGR-based de novo generation}

\begin{figure}
\centering
\includegraphics[width=1\linewidth]{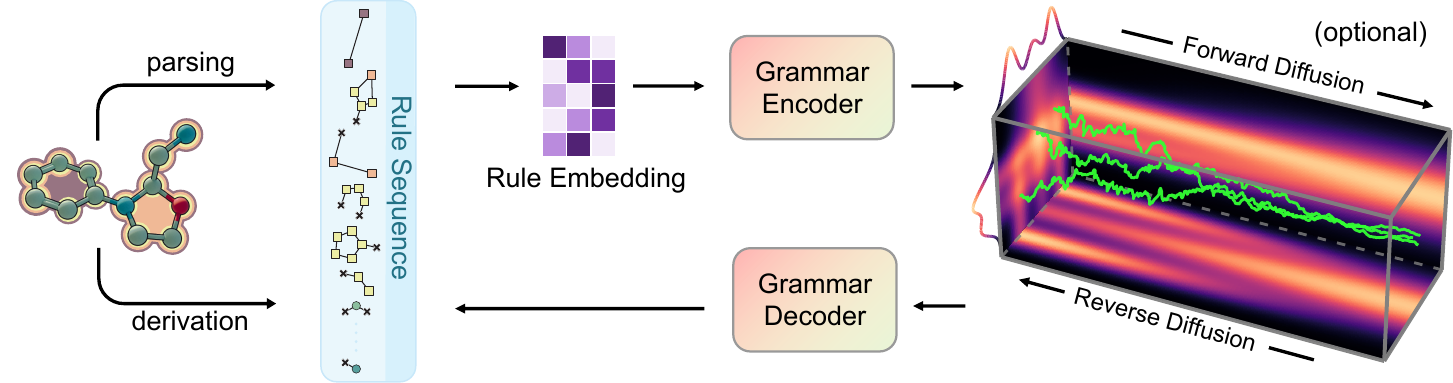}
\caption{\textbf{Workflow of HGR-based molecular generative models.}
Molecules are parsed into HGR strings, encoded by the grammar encoder and reconstructed by the grammar decoder. An optional latent diffusion module enables sampling in the learned latent space.
}
\label{SI-fig:gen-framework}
\end{figure}

\subsubsection{Grammar variational autoencoder}
\label{SI-sec:gvae-arch}

The HGR generative model is a sequence-to-sequence grammar variational autoencoder (GVAE) that operates on HGR strings rather than atoms (\cref{SI-fig:gen-framework}).

For all reported HGR-VAE and HGR-LDF results, each rule token is mapped to a WL-structured embedding, and a Transformer encoder maps the resulting variable-length sequence to a fixed-dimensional summary.
This representation parameterises the Gaussian posterior $q_{\bm{\phi}}(\bm{z}\mid\bm{r})$, from which a latent code $\bm{z}\in\mathbb{R}^{d}$ is sampled using the reparameterization trick.

The decoder is a $\bm{z}$-conditioned GRU that autoregressively generates production rules, with grammar constraints applied at each decoding step.
A last-in-first-out stack determines which pending non-terminal site is expanded next, while anchor-interface matching restricts rule selection to admissible production rules. 
During decoding, rule generation is capped at the dataset-specific maximum sequence length. Sequences whose stacks remain non-empty at this limit are discarded as incomplete derivations; completed HGR strings are syntactically valid and decode deterministically to molecular complexes.
The model is trained with the $\beta$-VAE objective defined in Methods under a cyclical Kullback--Leibler annealing schedule.
The network widths, depths and per-dataset training settings are listed in \cref{SI-tab:gvae_config}.

\subsubsection[Latent diffusion sampling]{Latent diffusion sampling}
\label{SI-sec:latent-diffusion}

To further improve sample quality, we model the GVAE latent space with a score-based diffusion framework that perturbs latent codes with noise and trains a neural network to reverse this corruption.
Sampling then starts from a tractable prior and follows the learned reverse process to generate latent codes approximating the target distribution $p(\bm{z}_0)$~\cite{NCSN+NeurIPS2019, Score-SDE+ICLR2021}.

Specifically, the time-dependent forward process of the diffusion model can be described by the following stochastic differential equation (SDE):
\begin{equation*}
\mathrm{d}\bm{z}_t=\bm{f}_t\left(\bm{z}_t\right)\mathrm{d}t+g_t\mathrm{d}\bm{w}_t,
\end{equation*}
where $\bm{f}_t: \mathbb{R}^n \to \mathbb{R}^n$ is a vector-valued drift function, $g_t: [0,T]\to \mathbb{R} $ is a scalar diffusion coefficient, and $\bm{w}_t$ represents a Wiener process. 
Under this SDE, $p(\bm{z}_0)$ evolves to $p(\bm{z}_T)$, which for sufficiently large $T$ approximates a tractable prior, such as a standard Gaussian distribution.

Starting from time $T$, $p(\bm{z}_T)$ can be progressively transformed back to $p(\bm{z}_0)$ by following the trajectory of the reverse-time SDE~\cite{SDEreverse1982},
\begin{equation*}
    \mathrm{d} \bm{z}_t=[\bm{f}_t(\bm{z}_t)-g_t^2  \underbrace{\nabla_{\bm{z}_t} \log p_t(\bm{z}_t)}_{\text{score function}}]\diff{\bar{t}}
    + g_t \mathrm{d} \bar{\bm{w}}_t,
\end{equation*}
where $p_t(\cdot)$ denotes the probability density function of $\bm{z}_t$ and $\bar{\bm{w}}$ is a reverse-time Wiener process.

The term $\nabla_{\bm{z}_t} \log p_t(\bm{z}_t)$, known as the score function, is typically parameterised by a neural network $\bm{s}_{\bm{\theta}}(\bm{z}_t,t)$ and trained using the conditional score-matching loss function~\cite{Scorematching2011}:
\begin{equation*}
    \ell(\bm{\theta})
\triangleq \mathbb{E}_{t, \bm{z}_t}\left[\omega(t)\left\|\bm{s}_{\bm{\theta}}(\bm{z}_t,t) - \nabla_{\bm{z}_t} \log p_t(\bm{z}_t)\right\|^2\right]
\propto
\mathbb{E}_{t,\bm{z}_0,\bm{z}_t }\left[ \omega(t) \left \Vert \bm{s}_{\bm{\theta}}(\bm{z}_t, t) - \nabla_{\bm{z}_t} \log p_t (\bm{z}_t|\bm{z}_0)\right\Vert^2\right],
\end{equation*}
where $\omega(t)$ is a weighting function. The second form is more commonly used because the conditional probability $p_t (\bm{z}_t|\bm{z}_0)$ is tractable. Generation then samples $\bm{z}_T$ from a tractable prior $p(\bm{z}_T ) \approx p_{\mathrm{prior}}(\bm{z})$ and recovers $\bm{z}_0$ by numerically solving the reverse-time SDE.

In HGR-LDF the score network $\bm{s}_{\bm{\theta}}$ is a latent diffusion Transformer that operates on the standardised GVAE posterior means while the grammar decoder is left unchanged; its architecture, the variance-preserving noise schedule and the predictor sampler are specified in \cref{SI-tab:ldf_config}.

\subsection{HGR-FM architecture and pretraining}
\label{SI-sec:HGR-FM}

HGR-FM operates on HGR strings and is pretrained on large molecular datasets. 
It combines structure-aware rule embeddings with a Grammar Transformer backbone, a higher-order position encoding and a multi-level pretext objective.

\paragraph{Grammar Transformer backbone.}
The Grammar Transformer is the shared encoder backbone of HGR-FM.
Each rule token is mapped to a structure-aware embedding (\cref{SI-sec:rule-init}) and linearly projected to the model width, with dedicated vectors for padding and special boundary tokens.
A stack of Transformer layers then processes the sequence.
After the final layer, contextualized atom-level token embeddings feed atom-level tasks, whereas the final hidden state of the molecule-level token serves as a fixed-length molecular fingerprint for fragment-, molecule- and 3D-level tasks and downstream transfer.

\paragraph{Higher-order position encoding.}
Within each Transformer layer, self-attention is augmented with a composite higher-order bias, defined in Methods.
This bias combines rank-aware rotary position encoding over cell height, additive biases indexed by grammar-tree distance and discrete relation type, and shortest-path spatial and edge-path biases~\cite{Graphormer-NeurIPS2021} on atom-token pairs.
Attention heads are divided into four equal groups.
Rank-aware RoPE is applied to all heads; one group receives no additive structural bias, whereas the other three receive the grammar-tree, shortest-path and edge-path biases, respectively.
Per-layer learnable scales weight the three additive biases.

\paragraph{Pretext tasks.}
\label{SI-sec:pretext_tasks}

Supervised molecular labels are often task-specific and costly to obtain, whereas large unlabelled molecular libraries are readily available.
HGR-FM therefore uses multi-level pretext objectives to learn transferable representations from unlabelled molecules.
Following the multi-level objective design of MORE~\cite{MORE-AAAI2025}, these tasks cover atom masking, substructure-fingerprint prediction, descriptor regression and conformation-distance prediction.
They provide atom-, fragment-, molecule- and 3D-conformation-level supervision, respectively.
The atom- and fragment-level tasks are adapted to the HGR rule-token input.

\emph{Atom-level task.}
During data collation, 25\% of atoms are sampled for masking, with at least one masked atom per molecule. The atom feature is replaced by a mask token and, when grammar inputs are used, the corresponding rule-sequence position is replaced by the grammar mask token. The target is the original atom type over the atom vocabulary. A one-layer GIN decoder reconstructs this target from contextualized atom-token embeddings using the scaled cosine-error loss~\cite{GraphMAE-KDD2022}. This task supervises local atomic environments.

\emph{Fragment-level task.}
For fragment-level supervision, the model predicts two binary substructure targets: 1{,}024-bit Morgan fingerprints (radius 2)~\cite{morgan_fp2010} and 166-bit Molecular ACCess System (MACCS)~\cite{maccs-durant2002} fingerprints. These targets expose functional-group and local-substructure information not captured by atom masking alone.

\emph{Molecular-level task.}
For molecular-level supervision, the target vector contains standardised RDKit descriptors, including molecular weight and logP, to represent global physicochemical properties.

\emph{3D-conformation-level task.}
For 3D-conformation supervision, optimised conformers provide pairwise interatomic-distance targets, adding geometric constraints to the HGR-string representation.

The resulting weighted pretraining loss uses the per-task coefficients $\lambda$ listed in~\cref{SI-tab:fm_config}. Together, the tasks supervise complementary scales, from individual atoms to whole-molecule geometry.

\section{RingDiv dataset construction}

\subsection{Source databases and curation}
\label{SI-sec:ringdiv-sources}

This section documents the construction of the RingDiv datasets: RingDiv (1.18 million molecules) and RingDiv300k, a curated development subset used for rapid experiments and ablation studies. It covers source aggregation, molecular standardisation, ring-topology annotation and quota-balanced diversity sampling.

The starting pool combined approximately $143$ million compounds from PubChem~\cite{kim2016pubchem}, SureChEMBL~\cite{papadatos2016surechembl}, and five commercial screening libraries: eMolecules, Asinex, ChemDiv, Enamine and MolPort.
All structures were standardised and converted to RDKit molecular objects. Entries that failed RDKit parsing were removed. Source-level duplicates were removed using InChI identifiers~\cite{heller2015inchi} generated from the standardised structures, so repeated records from different repositories were collapsed before filtering.

Ring perception was performed on the standardised pool to obtain ring-size and fusion-topology descriptors. Ring sizes were binned as small (3--4 atoms), normal (5--7 atoms), medium (8--11 atoms) or macro ($\geq 12$ atoms).
Ring systems were also labelled by connectivity pattern as spirocyclic, fused, condensed or bridged.
The annotated pool was filtered to focus on drug-like ring-containing small molecules. We retained compounds with 1--10 rings and molecular weights of 100--900 Da, and excluded molecules containing macrocycles. 
Macrocycles were excluded because they were sparse in the source databases and differ from common ring systems in conformational flexibility and synthetic accessibility.
The resulting per-molecule descriptors were used to guide quota-balanced sampling and to quantify ring-topology coverage in the released benchmark.

\subsection{Quota-balanced diversity sampling}
\label{SI-sec:ringdiv-sampling}

Sampling proceeded in two stages to balance rare ring topologies with whole-molecule structural diversity. 
First, quotas were defined using the ring-system descriptors described above, with emphasis on relatively rare but structurally informative categories such as medium rings, spirocyclic systems and bridged systems.
Molecules were drawn until the minimum quota for each rare class was met, and the remaining sampling budget was filled by random expansion into an oversampled candidate set. This step preserved the intended marginal balance across topology classes before diversity selection.

In the second stage, the oversampled candidates were encoded as 2{,}048-bit Morgan fingerprints (radius 2)~\cite{morgan_fp2010}. 
A MaxMin criterion based on pairwise Tanimoto similarities was then used to select a globally dispersed subset while suppressing near-duplicate neighbours.
We used an approximate Tanimoto-similarity threshold of 0.65, which reduced redundancy without eroding the rare-topology coverage established by the quota pass.

\subsection{RingDiv variants}
\label{SI-sec:ringdiv-variants}

RingDiv is the full released benchmark produced by the curation and sampling procedure above. RingDiv300k is a smaller topology-balanced subset selected from RingDiv, using tighter quotas for rare ring categories while retaining the same drug-like scope.
The two variants therefore share provenance, preprocessing and filtering criteria, but differ in scale and rare-ring balance. Dataset sizes and per-tag molecule counts are given in~\cref{SI-tab:ringdiv_config}.

\begin{table}[h]
\centering
\caption{Ring-topology composition of the released RingDiv variants. Each cell reports the number of molecules containing at least one ring of the indicated type.}
\label{SI-tab:ringdiv_config}
\begin{tabular}{lcccccccc}
\toprule
Variant & $N$ & SRing & NRing & MRing & Spiro & Fused & Condensed & Bridged \\
\midrule
RingDiv     & 1,183,434 & 157,542 & 1,160,912 & 69,061 & 104,198 & 522,461 & 582,747 & 113,423 \\
RingDiv300k & 299,819   & 42,703  & 290,985   & 33,591 & 32,026  & 147,500 & 164,171 & 37,660  \\
\bottomrule
\end{tabular}
\end{table}

\section{Experimental setup}

\subsection{Molecular generation metrics}
\label{SI-sec:gen-metrics}

We categorize the evaluation metrics into three groups: basic metrics, chemistry-based metrics, and topology-based metrics. This last group includes graph-level similarity, nearest-neighbour and scaffold distribution measures, and curvature-filtration descriptors based on persistent homology.

\begin{table}[!ht]
\centering
\caption{Categorization of evaluation metrics for molecular generative models.}
\label{SI-tab:metric_categories}
\begin{tabular}{lll}
\toprule
\multicolumn{2}{l}{\textbf{Category}} & \textbf{Metrics} \\
\midrule
\multicolumn{2}{l}{Basic metrics}
& Validity, Uniqueness, Novelty, V.U.N. \\ \midrule
\multicolumn{2}{l}{Chemistry-based metrics}
& FCD, KL score and SA distribution \\ \midrule
\multicolumn{2}{l}{Topology-based metrics}
& NSPDK, SNN, Scaffold, and curvature-filtration descriptors ($\kappa_{\mathrm{OR}}$, $\kappa_{\mathrm{BF}}$) \\
\bottomrule
\end{tabular}
\end{table}

For QM9, ZINC250k, RingDiv300k and GuacaMol, we evaluated 10{,}000 generated molecules; for MOSES, we used 25{,}000 molecules, matching the benchmark protocols used by the compared methods~\cite{DiGress+ICLR2023}. MOSES and GuacaMol were evaluated with their official benchmarking libraries. 
We follow the MOSES evaluation convention~\cite{moses-2020}: validity is assessed on the raw generated set, whereas all other scores use the valid subset $G$ and compare it with the corresponding reference set $R$.
Unless noted otherwise, chemistry- and topology-based metrics quantify agreement between $G$ and $R$; the preferred direction is marked in the result tables (\cref{SI-sec:gen-rel}).

\begin{itemize}[topsep=2pt,itemsep=2pt,parsep=0pt]
\item \textbf{Validity} reports the proportion of generated structures that pass molecular parsing and sanitisation, including basic valence and bonding checks.

\item \textbf{Uniqueness} measures how many valid outputs remain after canonicalisation and duplicate removal.

\item \textbf{Novelty} is computed after validity filtering and canonicalisation by treating any output that matches a training molecule as a repeat. Small values therefore indicate training-set recall rather than exploration of new chemical structures.

\item \textbf{V.U.N.} combines these three criteria by counting molecules that are valid, unique and novel.

\item \textbf{Fr\'echet ChemNet Distance (FCD)} embeds generated and reference molecules with ChemNet, a neural network trained on bioactivity-related molecular data~\cite{FCD}.
Let $\mu_G,\Sigma_G$ and $\mu_R,\Sigma_R$ denote the empirical means and covariance matrices of the penultimate-layer ChemNet activations for $G$ and $R$. The distance is
\begin{equation*}
\mathrm{FCD}(G,R)
= \left\|\mu_G-\mu_R\right\|_2^2
+ \operatorname{Tr}\left(\Sigma_G+\Sigma_R-2(\Sigma_G\Sigma_R)^{1/2}\right).
\end{equation*}
Lower FCD indicates closer distributions in this learned feature space. We use it as a primary distribution-learning metric because it summarises both location and spread. It remains a Gaussian summary of learned features, so we interpret it together with validity, novelty and structure-level metrics rather than as a standalone measure of generation quality.

\item \textbf{KL score} follows the GuacaMol distribution-learning benchmark and compares descriptor distributions between generated and reference molecules~\cite{guacamol-JCIM2019}.
Descriptor-wise Kullback--Leibler divergences are transformed into a bounded score. 
Higher values indicate better property-distribution agreement.

\item \textbf{Synthetic Accessibility (SA)} is a heuristic estimate of synthetic difficulty~\cite{SAscore-2009}.
When SA is reported with a downward arrow, we report a discrepancy from the reference distribution rather than the raw molecule-level score.

\item \textbf{Neighborhood Subgraph Pairwise Distance Kernel (NSPDK)} evaluates graph-topological agreement through the MMD between generated and test-set molecules, using node and edge features.

\item \textbf{Similarity to a nearest neighbour (SNN)} asks whether each generated molecule has a close analogue in the reference set~\cite{moses-2020}.
We compute it as the average Morgan-fingerprint Tanimoto similarity between each generated molecule and its nearest reference-set neighbour.
SNN is best interpreted jointly with novelty: it checks whether generated molecules stay near the reference-set chemical manifold, but by itself it cannot distinguish faithful local chemistry from nearest-neighbour memorization.

\item \textbf{Scaffold similarity (Scaf)} reduces each molecule set to a Bemis--Murcko scaffold count vector~\cite{BemisMurcko1996}.
Following the MOSES benchmark definition~\cite{moses-2020}, we compare generated and reference scaffold-count vectors by cosine similarity.
Scaf is complementary to SNN because it aggregates core-framework frequencies at the dataset level rather than nearest-neighbour similarity at the molecule level. It can therefore reveal whether a generator preserves the scaffold composition of the reference set, while still allowing molecule-level novelty through different substitutions or side chains.

\item \textbf{Filter score} is reported for MOSES and gives the share of generated molecules retained by the benchmark's medicinal-chemistry and structural filters~\cite{moses-2020}.

\item \textbf{Curvature-filtration descriptors.} We use the curvature-filtration framework~\cite{CurvFilt+NeurIPS2023} to compare graph topology through persistent homology of edge filtrations, complementing the substructure and graph-kernel metrics above.
The framework assigns a scalar curvature to each edge under two complementary notions: Ollivier--Ricci curvature $\kappa_{\mathrm{OR}}$, defined from the $1$-Wasserstein distance between local neighbourhood distributions, and Balanced Forman curvature $\kappa_{\mathrm{BF}}$~\cite{BalForman+ICLR2022}, which depends on endpoint degrees and local cycle counts, including triangles and chordless $4$-cycles. 
Each curvature induces an edge filtration of the molecular graph; we compute its $0$- and $1$-dimensional persistent homology, summarise each diagram as a persistence landscape and compare the landscape distributions of generated and reference sets. 
Because these descriptors showed high sampling variance at the evaluated sample sizes (reference-versus-reference coefficients of variation, roughly 30-60\%), we report them as auxiliary distributional checks rather than primary ranking metrics.

\end{itemize}

\subsection{Functional-group profiling}
\label{SI-sec:fg}

\begin{figure}
    \centering
    \includegraphics[width=1\linewidth]{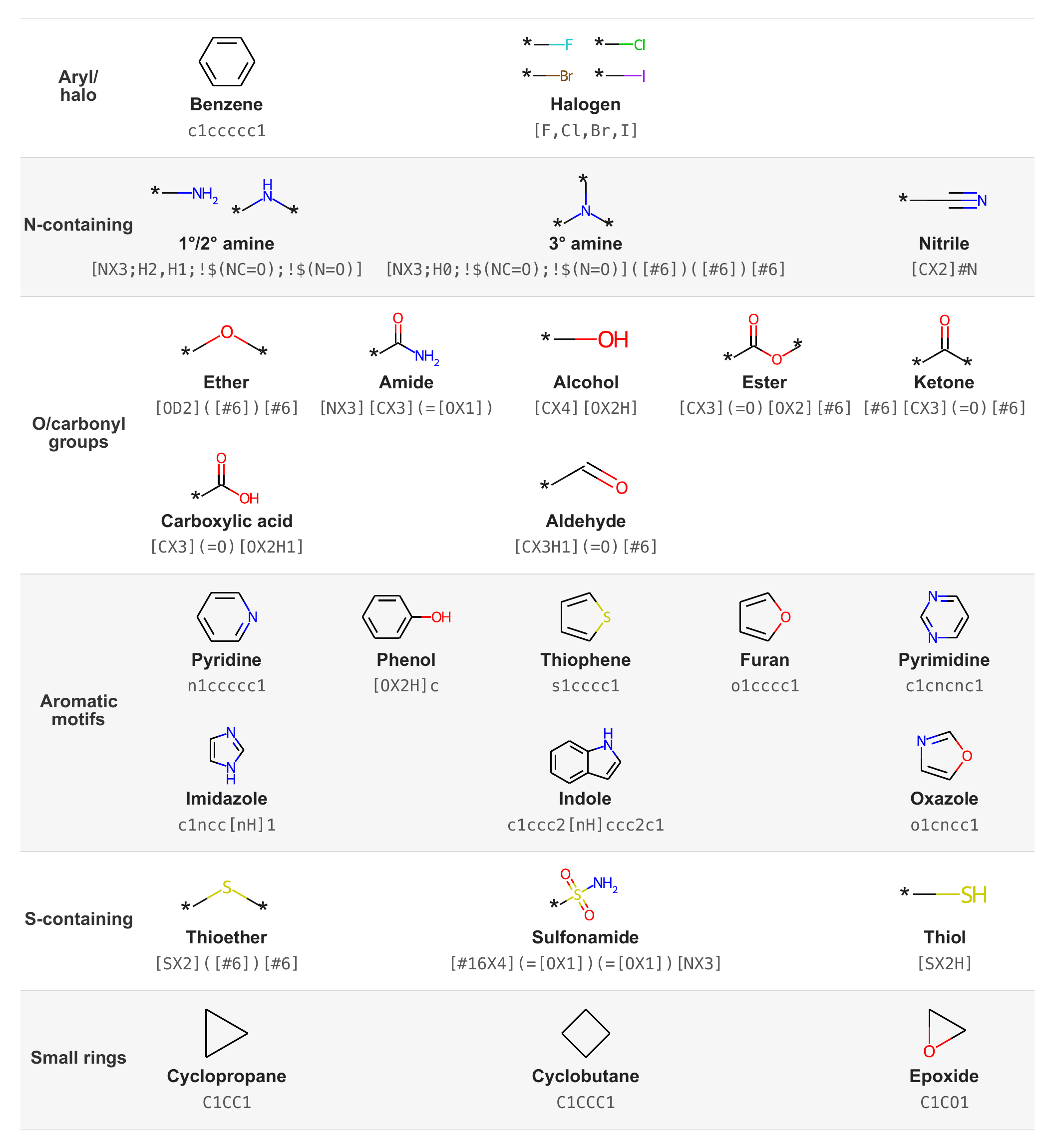}
\caption{{\bf Functional groups and SMARTS patterns for molecular profiling.} Representative structures and RDKit-compatible SMARTS queries used to annotate functional groups in RingDiv300k reference and generated molecules. 
Categories follow a previous functional-group diversity analysis and include aryl and halogen motifs, nitrogen-containing groups, oxygen- and carbonyl-containing groups, aromatic motifs, sulfur-containing groups and strained small rings. 
SMARTS patterns were adapted from prior functional-group profiling work~\cite{YuelDesign-SA2026} and applied as molecule-level, non-exclusive queries.
}
\label{SI-fig:functional-group}
\end{figure}

Following prior work on functional-group diversity~\cite{YuelDesign-SA2026}, we used SMARTS-based substructure profiling to compare the local chemical composition of generated molecules with the RingDiv300k real test set. This analysis complements global fingerprint- and distribution-level metrics by asking whether generated molecules retain recognizable organic functional motifs spanning the categories profiled here.

Reference and generated SMILES were standardised before profiling. We canonicalised SMILES without stereochemical annotations, normalised common functional states, neutralised charged groups where possible, retained the largest fragment, and sanitised the resulting RDKit molecule. Molecules that could not be parsed or sanitised were excluded from this profiling step.

Functional groups were assigned by RDKit substructure matching with SMARTS queries adapted from ref.~\cite{YuelDesign-SA2026}. Each SMARTS query defined one functional-group category, such as carboxylic acid, ester, amide, ketone, aldehyde, primary or secondary amine, tertiary amine, alcohol, phenol, ether, epoxide, thiol, thioether, sulfonamide, halogen, and representative aromatic or special ring motifs. We list the exact SMARTS implementations used in this study in \cref{SI-fig:functional-group}.

For each molecule, a functional group was recorded as present if at least one matching substructure was found, irrespective of the number of matches. Frequencies reported in Fig.~4a,b are therefore molecule-level occurrence rates rather than raw substructure counts. Categories are non-exclusive, so a molecule can contribute to several functional groups.
For readability, the main figure separates common functional groups from less frequent or special motifs. Within each panel, chemically related groups are separated by dashed guides and ordered by their frequency in the real test set.

\subsection{HGR-FM training and evaluation}

We evaluated downstream transfer on seven MoleculeNet benchmarks~\cite{moleculenet-wu2018}, spanning biophysical, physiological and toxicity-related molecular properties.
Biophysical tasks include HIV, which measures HIV replication inhibition; BACE, which measures inhibition of human $\beta$-secretase 1 (BACE-1); and the Side Effect Resource (SIDER), which groups the side effects of marketed drugs into 27 system organ classes.
Physiological and toxicity-related tasks include Blood-brain barrier penetration (BBBP), which predicts blood-brain barrier permeability for molecules targeting the central nervous system. Tox21, ClinTox and ToxCast focus on toxicity-related endpoints.

The baseline-aligned downstream comparison used ZINC2m pretraining, whereas separate pretraining-data analyses used RingDiv and joint ZINC2m$+$RingDiv pretraining.
We used the standard MoleculeNet scaffold split protocol, with 80/10/10 training, validation and test partitions~\cite{MORE-AAAI2025}.
Compared with random splitting, scaffold splitting partitions molecules by core substructure, yielding a more challenging but more realistic evaluation of generalisation.
For downstream transfer, we evaluated two protocols, full fine-tuning and frozen-encoder probing, both with early stopping on validation performance. 
Both protocols used the same shallow three-layer multilayer perceptron (MLP) prediction head. In full fine-tuning, the encoder and head were updated jointly; in probing, the encoder was frozen and only the head was trained. 
We repeated each experiment across three random seeds and report the mean and standard deviation of test performance.
The same hyperparameters were used for both probing and full fine-tuning across all downstream tasks, without task-specific tuning (listed in \cref{SI-tab:fm_config}).

\subsection{Model configurations and training hyperparameters}
\label{SI-sec:hyperparameters}

The tables below summarise the principal architecture and optimisation settings, with continuous values rounded for readability. Complete per-run configurations, including dataset-specific scheduler, diffusion and optimiser settings, are provided with the code release.

The grammar variational autoencoders (\cref{SI-tab:gvae_config}) share a rotary-position Transformer encoder with attention pooling, a GRU decoder, hybrid structural rule initialisation, AdamW optimisation, linear warm-up, and cyclical $\beta$-annealing. 
The MIG-grammar ablation follows the same recipe, differing only in independently searched architecture and optimisation values.

The latent diffusion samplers (\cref{SI-tab:ldf_config}) operate on standardised GVAE posterior means. All use a variance-preserving SDE, Euler--Maruyama sampling without a corrector, final denoising, input LayerNorm and exponential-moving-average training. Dataset-specific diffusion, architecture and optimisation settings are provided in the released configurations. \cref{SI-tab:fm_config} summarises the molecular foundation model.

\begin{table}[p]
\centering
\caption{\textbf{Grammar variational autoencoder (HGR-VAE) configuration} for the five generation benchmarks under the RSG-induced grammar.}
\label{SI-tab:gvae_config}
\footnotesize
\setlength{\tabcolsep}{5pt}
\begin{tabular}{l ccccc}
\toprule
 & QM9 & ZINC250k & RingDiv300k & MOSES & GuacaMol \\
\midrule
\multicolumn{6}{l}{\textit{Architecture}}\\
Max rule-sequence length & 17 & 72 & 137 & 52 & 175 \\
Production rules $|\mathcal{P}|$ & 280 & 403 & 6{,}205 & 320 & 2{,}385 \\
Motif vocabulary & 500 & 500 & 1{,}000 & 1{,}000 & 1{,}000 \\
Latent dimension $d$ & 32 & 256 & 256 & 256 & 768 \\
Rule-token dimension & 64 & 256 & 768 & 512 & 768 \\
Encoder width & 384 & 512 & 512 & 512 & 384 \\
Encoder layers & 3 & 5 & 4 & 4 & 7 \\
Encoder heads & 8 & 8 & 16 & 16 & 8 \\
Decoder width & 384 & 768 & 768 & 512 & 1{,}024 \\
Decoder layers & 3 & 4 & 4 & 4 & 3 \\
Dropout & 0.05 & 0.05 & 0.05 & 0.05 & 0.10 \\
\midrule
\multicolumn{6}{l}{\textit{Optimisation}}\\
Learning rate & $1.8{\times}10^{-3}$ & $4.6{\times}10^{-4}$ & $5.9{\times}10^{-4}$ & $3.3{\times}10^{-4}$ & $1.3{\times}10^{-4}$ \\
Batch size & 1{,}024 & 512 & 512 & 1{,}024 & 256 \\
KL weight $\beta$ & 0.5 & 0.5 & 0.7 & 0.5 & 0.5 \\
\bottomrule
\end{tabular}
\end{table}

\begin{table}[p]
\centering
\caption{\textbf{Latent diffusion (HGR-LDF) configuration.} Tokens $\times$ patch dimension $=d$.}
\label{SI-tab:ldf_config}
\footnotesize
\setlength{\tabcolsep}{6pt}
\begin{tabular}{l cccc}
\toprule
 & ZINC250k & RingDiv300k & MOSES & GuacaMol \\
\midrule
\multicolumn{5}{l}{\textit{Architecture}}\\
Latent dimension $d$ & 256 & 256 & 256 & 768 \\
Tokens & 8 & 16 & 32 & 8 \\
Patch dimension & 32 & 16 & 8 & 96 \\
Hidden width & 768 & 512 & 512 & 1{,}024 \\
Layers & 8 & 8 & 8 & 6 \\
Attention heads & 4 & 8 & 8 & 16 \\
Dropout & 0.10 & 0.05 & 0.05 & 0.10 \\
\midrule
\multicolumn{5}{l}{\textit{Optimisation}}\\
Learning rate & $1.8{\times}10^{-3}$ & $7.5{\times}10^{-4}$ & $8.2{\times}10^{-5}$ & $7.9{\times}10^{-4}$ \\
Batch size & 512 & 1{,}024 & 512 & 512 \\
\bottomrule
\end{tabular}
\end{table}

\begin{table}[t]
\centering
\caption{\textbf{HGR-FM pretraining and fine-tuning configuration.}}
\label{SI-tab:fm_config}
\small
\begin{tabular}{ll}
\toprule
Setting & Value \\
\midrule
\multicolumn{2}{l}{\textit{Pretraining}}\\
Corpus (size) & ZINC2m \\
Grammar / production rules & RSG / 825 \\
Backbone (Grammar Transformer) & width 512, 4 layers, 8 heads \\
Dropout & 0.1 \\
Max grammar-tree path distance & 3 \\
Atom mask ratio / node loss & 0.25 / scaled cosine error \\
Pretext-task loss weights ($\lambda$) & atom 4.5; fragment 5.0; molecule 1.0; 3D 0.04 \\
Optimiser & AdamW (lr $10^{-3}$, weight decay $10^{-3}$) \\
Batch size & 512 \\
Training steps & 50,000 \\
Gradient clip / EMA / precision & 1.0 / 0.999 / bf16 (AMP) \\
\midrule
\multicolumn{2}{l}{\textit{MoleculeNet transfer}}\\
Tasks & BBBP, BACE, HIV, SIDER, Tox21, ClinTox, ToxCast \\
Split & scaffold, 80/10/10 \\
Protocols & probing; full fine-tuning \\
Shared optimiser & AdamW ($\beta{=}0.9,0.999$) \\
Head lr (both protocols) & $2{\times}10^{-3}$ \\
Encoder lr (full fine-tuning only) & $2{\times}10^{-4}$ ($0.1\times$ head lr) \\
Epochs / patience / batch size & 100 / 10 / 32 \\
EMA decay & 0.9999 \\
\bottomrule
\end{tabular}
\end{table}

\section{Supplementary results and analyses}

\subsection{Extended grammar analysis}

Grammar statistics on ZINC250k~\cite{data:zinc250k} form the primary analysis.
Here we provide complementary results on QM9~\cite{data:qm9} (\cref{SI-fig:qm9_grammar_statistics}), which exhibit the same qualitative trends.
We restrict this cross-method comparison to these two relatively small datasets because MIG and MHG induction are substantially slower on larger corpora, and MHG yields a rapidly inflating rule corpus. The MIG-induced variant is therefore reported as a supplementary grammar-construction ablation, whereas HGR-VAE and HGR-LDF use the RSG-induced grammar unless otherwise stated.

\begin{figure}[!h]
    \centering
    \includegraphics[width=0.6\linewidth]{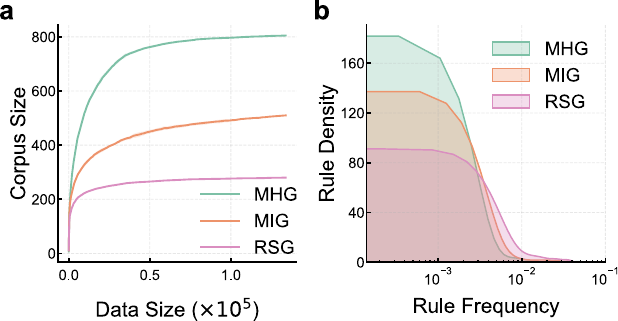}
    \caption{\textbf{Grammar statistics on QM9 dataset.}
{\bf a}, Grammar corpus growth on QM9, showing the number of unique rules discovered as progressively more molecules are included during grammar induction.
{\bf b}, Rule-frequency distributions on QM9.
}
\label{SI-fig:qm9_grammar_statistics}
\end{figure}

\subsection{Extended generation benchmarks}
\label{SI-sec:gen-rel}

The extended tables report the basic, chemistry- and topology-based metrics defined above, including validity without correction, uniqueness, novelty, V.U.N.~\cite{GDSS+ICML2022}, Fr{\'e}chet ChemNet Distance (FCD)~\cite{FCD}, KL score~\cite{guacamol-JCIM2019}, SA distribution~\cite{SAscore-2009}, Neighborhood Subgraph Pairwise Distance Kernel (NSPDK) MMD~\cite{NSPKD-MMD}, SNN, scaffold similarity and curvature-based descriptors~\cite{CurvFilt+NeurIPS2023}.
Arrows indicate the preferred direction of each metric ($\uparrow$: higher values indicate better performance; $\downarrow$: lower values indicate better performance).
Best values for rank-interpretable metrics are shown in bold. Curvature-filtration descriptors are reported as auxiliary metrics and are not highlighted.

Additional MOSES-specific baselines include GraphINVENT~\cite{GraphINVENT-2021}, DisCo~\cite{DisCo-NeurIPS2024} and Cometh~\cite{Cometh-TMLR2025}; the remaining generation baselines are described in Methods.
MHG-VAE~\cite{MHG-VAE-ICML2019} is included only for QM9 and ZINC250k, where direct comparison with a molecular hypergraph grammar baseline is feasible. 
We omit it from larger generation benchmarks because MHG induction is substantially slower and yields a rapidly expanding rule corpus, consistent with the grammar-analysis results.

\begin{table}[!h]
\centering
\setlength{\tabcolsep}{2pt} %
\caption{\textbf{Generation performance on the RingDiv300k benchmark.}}
\resizebox{\linewidth}{!}{
\begin{tabular}{lcccccccccc}
\toprule
Model & Val. $\uparrow$ & Unique. $\uparrow$ & Nov. $\uparrow$ & V.U.N. & KL score $\uparrow$ & FCD $\downarrow$ & NSPDK $\downarrow$ & SA dist. $\downarrow$ & $\kappa_{\text{BF}}\downarrow$ & $\kappa_{\text{OR}}\downarrow$ \\
\midrule
Training 
& 100.0\spm{0.0} & 100.0\spm{0.0} & N/A & N/A & 99.8\spm{0.0} & 0.1737\spm{0.0024} & 0.0001\spm{0.0000} & 0.0123\spm{0.0040} & 0.1312\spm{0.0577} & 0.1203\spm{0.0644} \\
\midrule
DiGress 
& 98.9\spm{0.0} & \textbf{100.0}\spm{0.0} & \textbf{100.0}\spm{0.0} & 98.90\spm{0.04} & 98.2\spm{0.1} & 0.6300\spm{0.0110} & \textbf{0.00041}\spm{0.00006} & 0.4171\spm{0.0247} & 0.1021\spm{0.0316} & 0.2656\spm{0.0582} \\
DeFoG
& 98.3\spm{0.0} & \textbf{100.0}\spm{0.0} & \textbf{100.0}\spm{0.0} & 98.27\spm{0.04} & 98.7\spm{0.1} & 0.8575\spm{0.0087} & 0.00109\spm{0.00002} & 0.1298\spm{0.0081} & 0.4735\spm{0.0373} & 0.4045\spm{0.0088} \\
HoG-Diff 
& \textbf{100.0}\spm{0.0} & 99.9\spm{0.0} & 99.6\spm{0.1} & 99.57\spm{0.1} & 98.2\spm{0.1} & 0.9250\spm{0.0236} & 0.00134\spm{0.00026} & 0.1435\spm{0.0333} & 0.1810\spm{0.0326} & 0.1729\spm{0.03247} \\
SMILES-VAE
& 88.1\spm{0.1} & \textbf{100.0}\spm{0.0} & \textbf{100.0}\spm{0.0} & 88.1\spm{0.1}
& 97.6\spm{0.0} & 0.6642\spm{0.0156} & 0.00072\spm{0.00005} & 0.1653\spm{0.0117} & 0.1556\spm{0.0061} & 0.2375\spm{0.0502} \\
SELFIES-VAE
& \textbf{100.0}\spm{0.0} & \textbf{100.0}\spm{0.0} & \textbf{100.0}\spm{0.0} & \textbf{100.0}\spm{0.0}
& 92.8\spm{0.4} & 2.4266\spm{0.0526} & 0.00286\spm{0.00005} & 0.6077\spm{0.0071} & 0.6891\spm{0.0687} & 0.3874\spm{0.0738} \\
\midrule
HGR-VAE (RSG)
& \textbf{100.0}\spm{0.0} & \textbf{100.0}\spm{0.0} & \textbf{100.0}\spm{0.0} & \textbf{100.0}\spm{0.0}
& 98.1\spm{0.4} & 0.5882\spm{0.0070} & 0.00147\spm{0.00010} & 0.1095\spm{0.0090} & 0.5260\spm{0.1119} & 0.2583\spm{0.0911} \\
HGR-LDF (RSG)
& \textbf{100.0}\spm{0.0} & \textbf{100.0}\spm{0.0} & \textbf{100.0}\spm{0.0} & \textbf{100.0}\spm{0.0}
& \textbf{99.2}\spm{0.1} & \textbf{0.4652}\spm{0.0093} & 0.00106\spm{0.00010} & \textbf{0.0983}\spm{0.0050} & 0.6093\spm{0.1040} & 0.1431\spm{0.0104}\\
\bottomrule
\end{tabular}
}
\label{SI-tab:gen_ringdiv300k}
\end{table}

\begin{table}[!h]
\centering
\setlength{\tabcolsep}{3pt} %
\caption{\textbf{Generation performance on the GuacaMol benchmark.}}
\resizebox{\textwidth}{!}{
\begin{tabular}{lcccccccc}
\toprule
Model & Val. $\uparrow$ & Unique. $\uparrow$ & Nov. $\uparrow$ & V.U.N. & FCD score $\uparrow$ & KL score $\uparrow$ & $\kappa_{\text{BF}}\downarrow$ & $\kappa_{\text{OR}}\downarrow$ \\
\midrule
Training & 100.0\spm{0.0} & 100.0\spm{0.0} & N/A & N/A & 96.12\spm{0.09} & 99.71\spm{0.00} & 0.1276\spm{0.0819} & 0.0573\spm{0.0304}\\
\midrule
DiGress  & 85.2 & \textbf{100.0} & \textbf{99.9} & - & 68.0 & 92.9 & - & - \\ %
Disco    & 86.6 & 86.6 & 86.5 & - & 59.7 & 92.6 & - & - \\
Cometh   & 98.9 & 98.9 & 97.6 & - & 72.7 & 96.7 & - & - \\
DeFoG & 99.0 & 99.0 & - & 97.9 & 73.8 & \textbf{97.7} & - & - \\
HoG-Diff & 99.1\spm{0.1} & \textbf{100.0}\spm{0.0} & 97.0\spm{0.3} & 95.9\spm{0.2} & 78.3\spm{0.3} & 96.9\spm{0.3} & 0.1726\spm{0.0937} & 0.0944\spm{0.0423} \\
SMILES-VAE
& 91.0\spm{0.2} & \textbf{100.0}\spm{0.0} & 99.4\spm{0.0} & 90.6\spm{0.2} & 80.7\spm{0.5} & 92.7\spm{1.0} & 0.3840\spm{0.1309} & 0.2824\spm{0.0890} \\
SELFIES-VAE
& \textbf{100.0}\spm{0.0} & \textbf{100.0}\spm{0.0} & \textbf{99.7}\spm{0.0} & \textbf{99.7}\spm{0.0} & 39.9\spm{0.3} & 88.0\spm{0.6} & 0.5711\spm{0.0620} & 0.2381\spm{0.0315} \\
\midrule
HGR-VAE (RSG)
& \textbf{100.0}\spm{0.0} & \textbf{100.0}\spm{0.0} & 99.4\spm{0.1} & 99.4\spm{0.1} & 83.7\spm{0.3} & 93.8\spm{0.7} & 0.2518\spm{0.0428} & 0.1516\spm{0.0483} \\
HGR-LDF (RSG)
& \textbf{100.0}\spm{0.0} & \textbf{100.0}\spm{0.0} & 98.9\spm{0.0} & 98.9\spm{0.0} & \textbf{84.4}\spm{0.3} & 95.7\spm{0.4} & 0.2974\spm{0.0415} & 0.1151\spm{0.0048} \\
\bottomrule
\end{tabular}%
}
\label{SI-tab:guacamol_results}
\end{table}

\begin{table}[!h]
\centering
\setlength{\tabcolsep}{3pt} %
\caption{\textbf{Generation performance on the QM9 benchmark.}}
\resizebox{\linewidth}{!}{
\begin{tabular}{lcccccccc}
\toprule
Method & Val. $\uparrow$ & Unique. $\uparrow$ & Nov. $\uparrow$ & V.U.N. $\uparrow$ & FCD $\downarrow$ & NSPDK $\downarrow$ & $\kappa_{\text{BF}}\downarrow$ & $\kappa_{\text{OR}}\downarrow$ \\
\midrule
Training & 100.00\spm{0.00} & 99.99\spm{0.00} & N/A & N/A & 0.074\spm{0.004} & 0.0002\spm{0.0000} & 0.1337\spm{0.0560} & 0.2380\spm{0.0762}\\
\midrule
MiCaM    & 99.93 & 93.89 & 83.25 & -- & 1.045 & 0.001 & -- & -- \\
MoFlow   & 91.36\spm{1.23} & 98.65\spm{0.57} & 94.72\spm{0.77} & -- & 4.467\spm{0.595} & 0.017\spm{0.003} & -- & -- \\
EDP-GNN  & 47.52\spm{3.60} & \textbf{99.25}\spm{0.05} & 86.58\spm{1.85} & -- & 2.680\spm{0.221} & 0.005\spm{0.001} & -- & -- \\
GraphEBM & 8.22\spm{2.24} & 97.90\spm{0.05} & \textbf{97.01}\spm{0.17} & -- & 6.143\spm{0.411} & 0.030\spm{0.004} & -- & -- \\
GDSS     
& 95.72\spm{1.94} & 98.46\spm{0.61} & 86.27\spm{2.29} & \textbf{86.48} & 2.900\spm{0.282} & 0.003\spm{0.000} & 0.9252 & 0.6012 \\
DiGress  
& 99.19\spm{0.23} & 96.87\spm{0.24} & 71.69\spm{2.36} & 71.13 & 0.279\spm{0.008} & 0.0006\spm{0.0001} & 0.2511 & 0.3428 \\
DeFoG    & \textbf{100.00}\spm{0.00} & 96.61\spm{0.30} & 72.57\spm{1.89} & - & 0.268\spm{0.006} & 0.0005\spm{0.0001} & 0.1774 & 0.2856 \\
Cometh   & \textbf{100.00}\spm{0.00} & 96.75\spm{0.09} & 72.06\spm{0.19} & - & 0.248\spm{0.012} & 0.0005\spm{0.0001} & 0.2165 & 0.3137 \\
HoG-Diff
& 98.74\spm{0.22}  & 97.14\spm{0.11} & 75.12\spm{0.39} & 71.76\spm{0.19} & 0.172\spm{0.002} & \textbf{0.0003}\spm{0.0000} & 0.1137\spm{0.0322} & 0.2277\spm{0.1214}\\
MHG-VAE
& 99.96\spm{0.03} & 94.41\spm{3.17} & 86.81\spm{1.29} & 81.98\spm{3.98} & 1.534\spm{0.191} & 0.0043\spm{0.0022} & 0.5236\spm{0.2707} & 0.3670\spm{0.2571}\\
\midrule
HGR-VAE (MIG)
& \textbf{100.00}\spm{0.00} & 94.25\spm{0.12} & 71.56\spm{1.06} & 67.45\spm{0.94} & 0.119\spm{0.006} & \textbf{0.0003}\spm{0.0000} & 0.2193\spm{0.0731} & 0.0894\spm{0.0287} \\
HGR-VAE (RSG)
& \textbf{100.00}\spm{0.00} & 96.59\spm{0.27} & 73.15\spm{0.26} & 70.66\spm{0.14} & \textbf{0.110}\spm{0.004} & \textbf{0.0003}\spm{0.0000} & 0.0476\spm{0.0261} & 0.1009\spm{0.0532} \\
\bottomrule
\end{tabular}
}
\label{SI-tab:qm9_results}
\end{table}

\begin{table}[!h]
\centering
\setlength{\tabcolsep}{3pt} %
\caption{\textbf{Generation performance on the ZINC250k benchmark.}}
\resizebox{\linewidth}{!}{
\begin{tabular}{lcccccccc}
\toprule
Method & Val. $\uparrow$ & Unique. $\uparrow$ & Nov. $\uparrow$ & V.U.N. $\uparrow$ & FCD $\downarrow$ & NSPDK $\downarrow$ & $\kappa_{\text{BF}}\downarrow$ & $\kappa_{\text{OR}}\downarrow$ \\
\midrule
Training & 100.00\spm{0.00} & 99.98\spm{0.01} & N/A & N/A & 0.201\spm{0.005} & 0.0001\spm{0.0000} & 0.1007\spm{0.0563} & 0.0478\spm{0.0190}\\
\midrule
MiCaM    & \textbf{100.00} & 88.48 & 99.98 & -- & 32.395 & 0.166 & -- & -- \\
MoFlow   & 63.11\spm{5.17} & 99.99\spm{0.01} & \textbf{100.00}\spm{0.00} & -- & 20.931\spm{0.184} & 0.046\spm{0.002} & -- & -- \\
EDP-GNN  & 82.97\spm{2.73} & 99.79\spm{0.08} & \textbf{100.00}\spm{0.00} & -- & 16.737\spm{1.300} & 0.049\spm{0.006} & -- & -- \\
GraphEBM & 5.29\spm{3.83} & 98.79\spm{0.15} & \textbf{100.00}\spm{0.00} & -- & 35.471\spm{5.331} & 0.212\spm{0.075} & -- & -- \\
GDSS     & 97.01\spm{0.77} & 99.64\spm{0.13} & \textbf{100.00}\spm{0.00} & -- & 14.656\spm{0.680} & 0.019\spm{0.001} & -- & -- \\
DiGress  & 94.99\spm{2.55} & 99.97\spm{0.01} & 99.99\spm{0.01} & -- & 3.482\spm{0.147} & 0.0021\spm{0.0004} & -- & -- \\
DeFoG &
99.22\spm{0.08} & 99.99\spm{0.01} & -- & -- & 1.425\spm{0.022} & 0.0008\spm{0.0001} & -- & -- \\
HoG-Diff &
98.56\spm{0.12} & 99.96\spm{0.02} & 99.53\spm{0.08} & 97.93\spm{0.05} & 1.633\spm{0.017} & 0.0015\spm{0.0016} & 0.1901\spm{0.0695} & 0.0981\spm{0.0411} \\ 
MHG-VAE
& 100.00\spm{0.00} & 99.19\spm{0.02} & \textbf{100.00}\spm{0.01} & 99.18\spm{0.01} & 5.219\spm{0.093} & 0.0123\spm{0.0015} & 4.9820\spm{0.1680} & 2.7055\spm{0.1268}\\
\midrule
HGR-VAE (MIG)
& \textbf{100.00}\spm{0.00} & 99.99\spm{0.01} & 99.98\spm{0.01} & 99.97\spm{0.01} & 0.648\spm{0.011} & 0.0009\spm{0.0000} & 0.3113\spm{0.0846} & 0.1321\spm{0.0410} \\
HGR-VAE (RSG)
& \textbf{100.00}\spm{0.00} & \textbf{100.00}\spm{0.00} & 99.99\spm{0.00} & \textbf{99.99}\spm{0.00} & 0.726\spm{0.017} & 0.0006\spm{0.0000} & 0.1894\spm{0.0680} & 0.0711\spm{0.0614} \\
HGR-LDF (RSG)
& \textbf{100.00}\spm{0.00} & \textbf{100.00}\spm{0.00} & 99.98\spm{0.01} & 99.98\spm{0.01} & \textbf{0.595}\spm{0.012} & \textbf{0.0005}\spm{0.0000} & 0.1316\spm{0.0481} & 0.0636\spm{0.0283} \\
\bottomrule
\end{tabular}
}
\label{SI-tab:zinc_results}
\end{table}

\begin{table}[!h]
\centering
\setlength{\tabcolsep}{2pt} %
\caption{\textbf{Generation performance on the MOSES benchmark.}}
\resizebox{\textwidth}{!}{
\begin{tabular}{lcccccccccc}
\toprule
Method & Val. $\uparrow$ & Unique. $\uparrow$ & Nov. $\uparrow$ & V.U.N. $\uparrow$ & FCD $\downarrow$ & Filters $\uparrow$ & SNN $\uparrow$ & Scaf. $\uparrow$ & $\kappa_{\text{BF}}\downarrow$
& $\kappa_{\text{OR}}\downarrow$ \\
\midrule
Training & 100.0\spm{0.0} & 100.0\spm{0.0} & - & - & 0.518\spm{0.007} & 100.0\spm{0.1} & 0.586\spm{0.000}& - & 0.1138\spm{0.0303} & 0.1365\spm{0.01592}\\
\midrule
GraphINVENT & 96.4 & 99.8 & - & - & 1.22 & 95.0 & 0.54 & 12.7 & - & - \\
DiGress     & 85.7 & \textbf{100.0} & 95.0 & - & 1.19 & 97.1 & 0.52 & 14.8 & - & - \\
DisCo      & 88.3 & \textbf{100.0} & \textbf{97.7} & - & 1.44 & 95.6 & 0.50 & 15.1 & - & - \\
Cometh       & 90.5 & 99.9 & 92.6 & - & 1.27 & \textbf{99.1} & 0.54 & \textbf{16.0} & - & - \\
DeFoG   & 92.8 & 99.9 & 92.1 & - & 1.95 & 98.9 & \textbf{0.55} & 14.4 & - & - \\
HoG-Diff
& 99.6\spm{0.1} & 99.9\spm{0.0} & 89.1\spm{0.1} & 88.7\spm{0.1} & 0.937\spm{0.0125} & 96.6\spm{0.2} & 0.519\spm{0.001} & 8.0\spm{0.4} & 0.1365\spm{0.0196} & 0.1625\spm{0.0200} \\
\midrule
HGR-VAE (RSG)             & \textbf{100.0}\spm{0.0} & \textbf{100.0}\spm{0.0} & 95.5\spm{0.1} & \textbf{95.5}\spm{0.1} & 0.676\spm{0.0078} & 97.8\spm{0.1} & 0.519\spm{0.000} & 11.4\spm{0.5} & 0.2860\spm{0.0624} & 0.1348\spm{0.0286} \\
HGR-LDF (RSG)      & \textbf{100.0}\spm{0.0} & \textbf{100.0}\spm{0.0} & 95.2\spm{0.1} & 95.2\spm{0.1} & \textbf{0.626}\spm{0.0087} & 98.0\spm{0.1} & 0.523\spm{0.001} & 10.5\spm{0.3} & 0.2587\spm{0.0329} & 0.0761\spm{0.0141} \\
\bottomrule
\end{tabular}%
}
\label{SI-tab:moses_results}
\end{table}

\subsection{HGR-FM transfer benchmarks}

The complete task-level transfer results for the pretrained molecular encoders are reported in \cref{SI-tab:FM_transfer_combined}.
This table complements the normalised summary in Fig.~5\textbf{b} by showing the original per-task AUC values and across-run standard deviations under both probing and full fine-tuning.

\begin{table*}[htbp]
\centering
\begin{threeparttable}
\caption{\textbf{Transfer performance of pretrained molecular encoders.}}
\label{SI-tab:FM_transfer_combined}
\begin{tabular}{@{}c@{}}
\begin{tabular}{l c c c c c c c c}
\toprule
Method & BBBP & Tox21 & ToxCast & SIDER & ClinTox & HIV & BACE & avg $\uparrow$ \\
\midrule
\multicolumn{9}{l}{\textit{a. Probing with frozen encoder}} \\
\midrule
Infomax & 60.8\spm{0.38} & 67.0\spm{0.40} & 58.3\spm{0.24} & 58.2\spm{0.76} & 62.6\spm{0.28} & 71.3\spm{1.36} & 65.1\spm{0.67} & 63.33 \\
EdgePred & 52.7\spm{1.45} & 63.0\spm{0.77} & 54.1\spm{0.39} & 51.7\spm{0.99} & 48.2\spm{5.46} & 65.2\spm{1.06} & 58.6\spm{1.76} & 56.21 \\
AttrMasking & 51.8\spm{0.23} & 69.3\spm{0.04} & 57.7\spm{0.08} & 51.3\spm{0.09} & 54.5\spm{0.44} & 60.5\spm{0.31} & 61.8\spm{0.62} & 58.13 \\
ContextPred & 58.8\spm{0.57} & 68.3\spm{0.24} & 58.8\spm{0.36} & 59.2\spm{0.19} & 40.0\spm{1.51} & 67.0\spm{0.62} & 59.6\spm{2.53} & 58.81 \\
GraphCL & 63.0\spm{0.29} & 67.6\spm{0.39} & 57.4\spm{0.48} & 52.8\spm{0.95} & 54.7\spm{5.10} & 64.8\spm{1.69} & 66.3\spm{0.29} & 60.94 \\
GraphLoG & 54.5\spm{0.30} & 66.8\spm{0.17} & 57.4\spm{0.17} & 58.0\spm{0.58} & 57.6\spm{1.29} & 65.2\spm{0.54} & 72.4\spm{0.65} & 61.70 \\
GraphMAE & 56.5\spm{0.41} & 66.7\spm{0.45} & 57.6\spm{0.10} & 52.0\spm{0.82} & 44.3\spm{0.57} & 60.5\spm{0.54} & 61.8\spm{5.53} & 57.06 \\
GraphMVP & 57.9\spm{0.68} & 66.9\spm{0.17} & 58.5\spm{0.09} & 56.0\spm{1.16} & 42.7\spm{1.99} & 67.1\spm{0.83} & 65.3\spm{0.20} & 59.20 \\
MORE & 67.9\spm{0.28} & 70.6\spm{0.33} & 62.2\spm{0.16} & 60.7\spm{0.28} & 63.5\spm{1.26} & 72.8\spm{0.32} & 80.0\spm{0.62} & 68.24 \\
\textbf{HGR-FM}
& \textbf{74.4}\spm{0.96}
& \textbf{78.5}\spm{0.57}
& \textbf{68.9}\spm{0.11}
& \textbf{63.0}\spm{0.58}
& \textbf{89.7}\spm{1.59}
& \textbf{76.9}\spm{1.02}
& \textbf{84.1}\spm{0.56}
& \textbf{76.51} \\
\midrule
\multicolumn{9}{l}{\textit{b. Full fine-tuning}} \\
\midrule
Infomax & 68.0\spm{1.05} & 75.0\spm{0.55} & 62.4\spm{0.57} & 59.7\spm{0.47} & 71.1\spm{3.88} & 76.8\spm{1.38} & 76.8\spm{1.32} & 69.97 \\
EdgePred & 65.3\spm{1.96} & 76.3\spm{0.26} & 63.6\spm{0.28} & 61.3\spm{0.54} & 64.5\spm{2.67} & 75.7\spm{1.05} & 79.9\spm{1.27} & 69.51 \\
AttrMasking & 63.4\spm{1.45} & 76.4\spm{0.19} & 63.4\spm{0.52} & 60.0\spm{0.87} & 71.1\spm{2.46} & 76.3\spm{0.28} & 79.7\spm{0.47} & 70.04 \\
ContextPred & 67.1\spm{0.81} & 74.4\spm{0.12} & 63.7\spm{0.12} & 60.9\spm{0.78} & 59.0\spm{1.99} & 76.6\spm{0.45} & 79.1\spm{2.13} & 68.69 \\
GraphCL & 68.0\spm{2.05} & 74.5\spm{0.12} & 62.4\spm{0.59} & 59.2\spm{1.12} & 75.3\spm{3.67} & 76.3\spm{1.09} & 76.8\spm{0.36} & 70.36 \\
GraphLoG & 66.8\spm{2.55} & 74.7\spm{0.25} & 62.4\spm{0.55} & 59.6\spm{0.67} & 64.2\spm{1.27} & 76.6\spm{0.77} & 82.3\spm{0.42} & 69.51 \\
GraphMAE & 68.0\spm{2.54} & 75.6\spm{0.27} & 63.4\spm{0.14} & 60.2\spm{0.49} & 70.8\spm{4.15} & 76.6\spm{0.87} & 82.1\spm{1.40} & 70.96 \\
GraphMVP & 71.3\spm{1.13} & 75.0\spm{0.37} & 63.4\spm{0.26} & \textbf{62.9}\spm{0.29} & 68.2\spm{5.95} & 75.8\spm{1.44} & 78.8\spm{4.61} & 70.77 \\
MORE & 71.9\spm{0.94} & 75.6\spm{0.54} & 64.6\spm{0.58} & 60.9\spm{0.62} & 81.0\spm{0.65} & 77.0\spm{0.74} & 82.8\spm{1.33} & 73.40 \\
\textbf{HGR-FM}
& \textbf{74.3}\spm{1.08}
& \textbf{79.0}\spm{0.41}
& \textbf{70.8}\spm{0.12}
& 62.1\spm{1.05}
& \textbf{89.4}\spm{1.00}
& \textbf{77.5}\spm{0.21}
& \textbf{84.0}\spm{0.43}
& \textbf{76.73} \\
\bottomrule
\end{tabular}%
\end{tabular}
\begin{tablenotes}
\footnotesize
\item All values are AUC (mean $\pm$ s.d.) over three independent runs using scaffold splits. In probing, the pretrained encoder is frozen and only probe layers are trained; in full fine-tuning, the encoder and task-specific head are optimised jointly.
\end{tablenotes}
\end{threeparttable}
\end{table*}

\subsection{Computational efficiency analysis}

All experiments in this work, including the runtime measurements reported below, were performed with PyTorch on a single NVIDIA L40S GPU (46 GB memory) and an AMD EPYC 9374F 32-core processor.

We analyse computational cost from two complementary perspectives: \textit{(i)} evaluation-time overhead, which can dominate large-scale benchmarking when many metrics are computed across datasets and random seeds; and \textit{(ii)} model-side runtime, which includes training and inference and determines practical iteration speed and sample throughput.

\subsubsection[Evaluation optimisation]{Evaluation optimisation}

To improve the scalability of our evaluation pipeline, we implemented targeted optimizations for the most time-consuming metrics. As shown in \cref{SI-fig:index_time_opt}\textbf{a}, the NSPDK computation is accelerated by approximately 5-fold. We also obtain substantial speedups for the curvature-filtration descriptors ($\kappa_{OR}$ and $\kappa_{BF}$), with nearly 2-fold acceleration.

\cref{SI-fig:index_time_opt}\textbf{b} shows that these gains are consistent across additional standard benchmarks (QM9, ZINC250k, MOSES, and GuacaMol), indicating that the optimised implementations reduce evaluation overhead across several data regimes. Overall, these optimisations increase evaluation throughput while preserving metric informativeness, enabling faster and more reproducible assessment of de novo generation performance.

\begin{figure}[h]
    \centering
    \includegraphics[width=\linewidth]{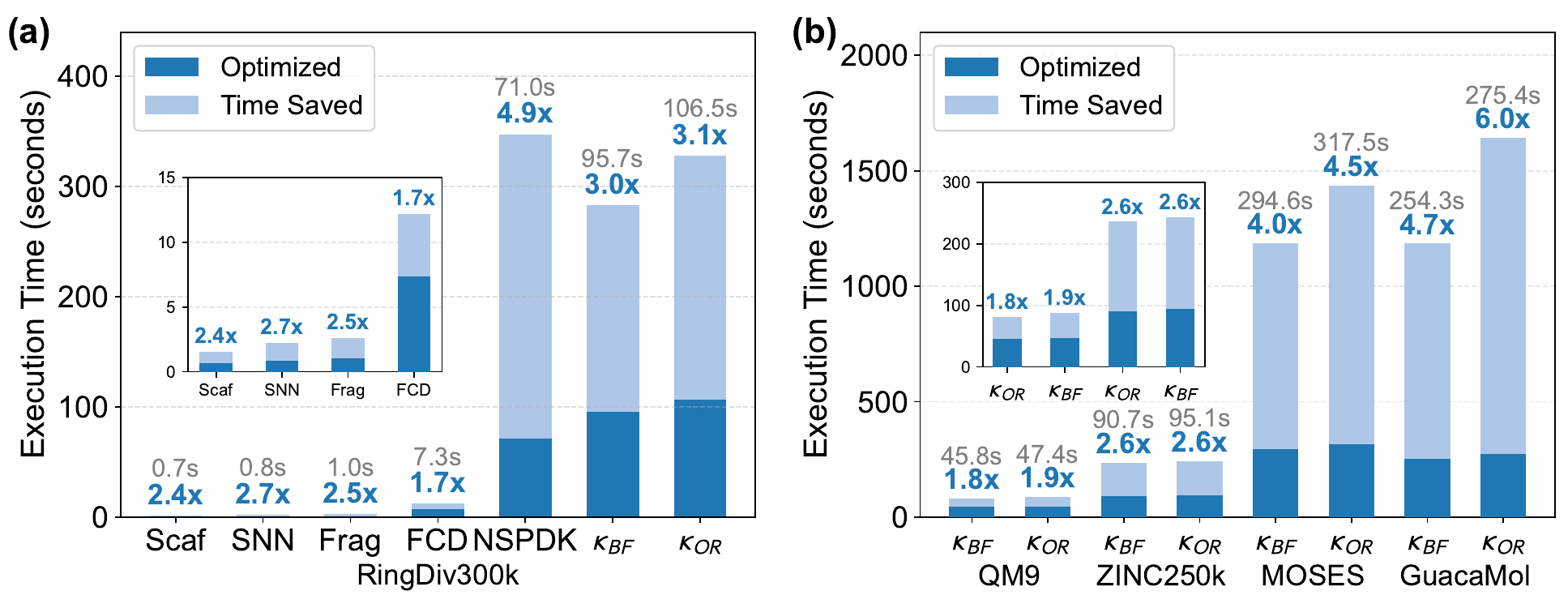}
    \caption{
{\bf Accelerated evaluation of generation metrics.}
{\bf a}, Wall-clock execution time of representative evaluation metrics on the RingDiv300k benchmark (Scaf, SNN, Frag, FCD, NSPDK, $\kappa_{BF}$, $\kappa_{OR}$). Dark bars indicate the optimised implementation; light shading denotes time saved relative to the original pipeline.
Blue bold numbers above bars denote the speed-up factor, and grey numbers report the post-optimisation runtime.
{\bf b}, Runtime of the two slowest metrics ($\kappa_{BF}$ and $\kappa_{OR}$) across additional standard datasets (QM9, ZINC250k, MOSES and GuacaMol), highlighting consistent acceleration under diverse data regimes.
Data represent mean values for ten independent replicates.}
    \label{SI-fig:index_time_opt}
\end{figure}

\subsubsection{Training and inference time}

We further report training and inference runtime to characterise the practical cost of model optimisation and sample generation. Together with the evaluation-time acceleration above, these results provide a system-level view of computational efficiency.

\cref{SI-tab:gen_runtime} summarises end-to-end training time and inference time across diverse molecular datasets, reflecting the costs of model development and sample generation at the same evaluation scale.
Across the settings reported here, both training and generation show favourable runtime relative to the compared baselines.

\begin{table}[!th]
\centering
\begin{threeparttable}
\caption{Training and inference runtime across molecular datasets.}
\label{SI-tab:gen_runtime}
\small
\setlength{\tabcolsep}{4.5pt}
\renewcommand{\arraystretch}{1.05}
\begin{tabular}{lccccc}
\toprule
Method & QM9 & ZINC250k & MOSES & GuacaMol & RingDiv300k \\
\midrule
\multicolumn{6}{l}{\textit{Training time (h)}} \\
\midrule
DiGress & - & - & - & - & 63.0\\
DeFoG & - & - & - & - & 53.1\\
HoG-Diff & 3.1 & 6.5  & 7.8 & 30.0 & 8.1 \\
HGR-VAE (RSG) & 0.3 & 3.1  & 1.4  & 8.4 & 3.0 \\
\midrule
\multicolumn{6}{l}{\textit{Inference time (min)}} \\
\midrule
DiGress & - & - & 155.5\spm{2.0} & 267.3\spm{2.9} & 166.5\spm{2.2} \\
DeFoG & - & - & - & - & 221.4\spm{1.1} \\
HoG-Diff & 12.6\spm{0.2} & 48.8\spm{0.4}  & 11.0\spm{0.8} & 93.7\spm{1.7} & 28.4\spm{0.6} \\
HGR-VAE (RSG) & 1.2\spm{0.1} & 2.5\spm{0.0}  & 3.4\spm{0.1}  & 2.1\spm{0.0} & 2.5\spm{0.2} \\
\bottomrule
\end{tabular}
\begin{tablenotes}
\footnotesize
\item Training time is reported in hours and inference time in minutes. Inference values are mean $\pm$ s.d. A dash indicates that comparable runtime was unavailable.
\end{tablenotes}
\end{threeparttable}
\end{table}

\newpage
\bibliography{ref}

\end{document}